\documentclass[letterpaper]{article}
\usepackage{aaai2027}
\usepackage[utf8]{inputenc}
\usepackage{natbib}
\usepackage{appendix}
\usepackage{graphicx}
\usepackage{tikz}
\usetikzlibrary{arrows.meta,positioning}
\usepackage{booktabs}
\usepackage{amsmath,amssymb,amsthm}
\usepackage{algorithm}
\usepackage{algorithmic}
\usepackage{enumitem}
\usepackage{multirow}
\usepackage{array}
\usepackage{tabularx}
\usepackage{makecell}
\usepackage{pifont}
\usepackage{xcolor}
\usepackage{colortbl}

\definecolor{bestgreen}{RGB}{214,238,214}   
\definecolor{secondblue}{RGB}{215,232,247}  
\definecolor{ouryellow}{RGB}{253,247,214}   
\newcommand{\ourrow}{\rowcolor{ouryellow}}              
\definecolor{measurecol}{RGB}{33,124,124}    
\definecolor{derivecol}{RGB}{197,118,26}     
\definecolor{traincol}{RGB}{52,140,58}        
\definecolor{stepcol}{RGB}{40,72,120}         
\nocopyright
\newcommand{\method}{\textsc{MotifRole-Diff}}
\newtheorem{theorem}{Theorem}
\newtheorem{lemma}{Lemma}
\newtheorem{corollary}{Corollary}
\newtheorem{proposition}{Proposition}
\theoremstyle{definition}

\begin{document}
\title{\method{}: Risk-Optimal Role-Aware Corruption\\ for Masked Molecular Graph Diffusion}
\author{
    Tasfia Nuzhat Ornee,
    Elias Hossain,
    Ivan Garibay,
    Niloofar Yousefi
}
\affiliations{
    University of Central Florida\\
    tasfia.ornee@ucf.edu
}

\maketitle


\begin{abstract}
Masked discrete diffusion for molecular graph generation typically applies a uniform corruption schedule to all tokens in a lossless graph-to-sequence representation, implicitly treating structurally heterogeneous molecular components as equally difficult and equally important to reconstruct. However, different molecular graph token roles exhibit substantial variation in denoising difficulty and their influence on the decoded molecule, motivating role-specific corruption strategies. We introduce \textsc{MotifRole-Diff}, a role-aware corruption process that allocates masking rates according to empirically measured denoising difficulty and graph-level perturbation impact while preserving the model architecture, clean sequence space, and lossless molecular-graph decoder. We formulate schedule selection as the risk-optimal allocation of a fixed masking budget across token roles. Our theorem characterizes optimality for the modeled role-weighted residual risk, while downstream generation performance is evaluated empirically. Under matched architecture, training budget, and sampling compute, \textsc{MotifRole-Diff} improves validity on QM9 from 0.905 to 0.944 while reducing FCD from 1.701 to 1.609, and on MOSES improves validity from 0.920 to 0.938 while reducing FCD from 2.125 to 1.850. Role-wise diagnostics further show improved reconstruction across molecular graph token categories. Together, these matched-compute results indicate that structurally informed corruption is a more effective masking strategy than uniform schedules for serialized molecular graph diffusion.
\end{abstract}

\section{Introduction}

Generating valid, novel, and diverse molecular graphs is a fundamental problem in computational molecular discovery, and diffusion models have become one of the strongest paradigms for molecular graph generation~\cite{jo2022gdss,vignac2023digress,chen2023edge}. Masked discrete language modeling (MDLM)~\cite{sahoo2024mdlm}, built on discrete denoising diffusion~\cite{austin2021d3pm,campbell2022continuous}, provides a scalable formulation by serializing molecular graphs into token sequences, denoising them in parallel, and reconstructing the original graph through a lossless decoder, combining exact reconstruction with the efficiency of sequence modeling.

This approach depends fundamentally on graph serialization. Prior work has shown that serialization order influences sequence-model efficiency~\cite{diamant2023bandwidth,jang2024geel}, while hierarchical molecular generators demonstrate the importance of preserving chemically meaningful motifs~\cite{jin2018jtvae,jin2020hierarchical}. Existing masked discrete diffusion models, however, apply an identical corruption schedule to every token position~\cite{austin2021d3pm,sahoo2024mdlm}, implicitly assuming that all serialized tokens are equally difficult to reconstruct and equally important for recovering the molecular graph.

That assumption is often incorrect. Serialized molecular graphs contain structurally distinct token roles: boundary markers, serialization syntax, motif-interior atoms and bonds, and interface tokens connecting substructures. Errors on these roles are not equally consequential, since some are reconstructed reliably with little effect on the decoded graph while others are harder to recover or disproportionately disrupt validity and connectivity, yet existing graph-to-sequence diffusion methods do not ask whether corruption should depend on these differences. We call this property \emph{role heterogeneity}, and hypothesize that masked molecular diffusion is therefore not only \emph{order-sensitive} but also \emph{role-sensitive}: uniform corruption is appropriate only when roles have comparable structural importance, and otherwise corruption should follow measured structural risk.

To investigate this hypothesis, we introduce \method{}, a role-sensitive framework for masked discrete molecular diffusion. We extend the lossless SENT representation~\cite{chen2026flatten} with the motif-aware serialization \emph{mSENT}, exposing four structural token roles while preserving decoder compatibility. We quantify each role's reconstruction difficulty and graph-level consequence using controlled perturbation experiments, then derive a role-aware corruption process by allocating a fixed masking budget according to measured structural risk. The resulting absorbing diffusion formulation generalizes standard MDLM and reduces to uniform corruption when all token roles possess identical structural criticality.

Our hypothesis gives rise to four research questions. \textbf{(RQ1)} Do serialized token roles differ significantly in reconstruction difficulty and graph-level structural consequence? \textbf{(RQ2)} Can role-aware corruption be derived as a principled constrained allocation of a fixed masking budget? \textbf{(RQ3)} Does empirically informed role-aware corruption outperform uniform masking, and do the gains stem from informed structural allocation rather than mere non-uniformity? \textbf{(RQ4)} Does motif-aware serialization improve motif locality while preserving the lossless decoding guarantees of SENT?

Throughout, architecture, optimization, compute, and expected masking rate are held constant so that differences arise only from serialization and corruption design. Our contributions are fourfold:

\begin{enumerate}[leftmargin=*,itemsep=1pt,topsep=2pt]

\item \textbf{Characterizing role heterogeneity in serialized molecular diffusion (RQ1).}
We identify structural token roles as an overlooked property of graph serialization and show that they differ systematically in reconstruction difficulty and graph-level structural consequence.

\item \textbf{A principled role-aware corruption process (RQ2).}
We formulate corruption scheduling as constrained structural-risk minimization under a fixed masking budget, yielding a role-aware absorbing diffusion process that generalizes standard MDLM and recovers uniform corruption as a special case.

\item \textbf{Controlled evaluation of role-aware corruption (RQ3).}
Under matched settings, \method{} improves generation quality over uniform MDLM at equal compute; because the schedule is derived from measured criticalities rather than set arbitrarily, the gain reflects informed structural allocation rather than non-uniform corruption alone.

\item \textbf{A motif-aware role-preserving serialization (RQ4).}
We introduce mSENT, a motif-aware extension of SENT that preserves the original lossless decoder while improving motif locality and exposing four interpretable structural token roles.

\end{enumerate}

\section{Preliminaries and Problem Formulation}

\paragraph{Notation.}
A molecular graph $G=(V,E,X,A)$ has atoms $V$, bonds $E$, features $X$, and adjacency
$A$. A lossless serialization $T(\cdot\,;\pi)$ maps $G$ under a traversal order $\pi$ to
a token sequence $z^{\pi}=T(G;\pi)$ with decoder $D(z^{\pi})\cong G$ and
$|z^{\pi}|=O(|V|+|E|)$. The motif-aware tokenizer assigns atoms to motif blocks
$\mu:V\to\{1,\dots,K\}$ and tags each token with a role
$r_i\in\mathcal R=\{\text{special},\text{syntax},\text{interior},\text{interface}\}$
(Table~\ref{tab:roles}); $\pi_r$ denotes the fraction of role-$r$ tokens.

\begin{table}[!htb]
\centering
\footnotesize
\setlength{\tabcolsep}{4pt}
\caption{Token roles induced by motif-aware serialization.}
\label{tab:roles}
\begin{tabularx}{\columnwidth}{@{}l
  >{\raggedright\arraybackslash}X
  >{\raggedright\arraybackslash}X@{}}
\toprule
Role & Meaning & Consequence of error \\
\midrule
Special   & control / \texttt{[MASK]} tokens & invalid boundary or padding \\
Syntax    & grammar, reset, delimiters       & malformed or undecodable graph \\
Interior  & atoms/bonds within one motif     & local motif corruption; occasional scaffold / atom-bond change \\
Interface & cross-motif attachment tokens    & attachment ambiguity and possible connectivity error; highest single-token perturbation impact \\
\bottomrule
\end{tabularx}
\end{table}

\paragraph{Masked discrete diffusion.}
An absorbing forward process replaces each token of a clean sequence $z_0$ by
\texttt{[MASK]} with probability $\alpha_t$ at diffusion time $t$, and a denoiser
$p_\theta$ reconstructs the masked tokens under the absorbing-state ELBO,
\begin{equation}
\mathcal{L}_{\text{MDLM}}=\mathbb{E}_{z_0,t,M_t}\!\Big[w(t)\!\!\sum_{i\in M_t}\!
-\log p_\theta\big(z_0^{(i)}\mid z_t,t\big)\Big].
\label{eq:mdlm}
\end{equation}
The rate $\alpha_t$ is token-agnostic: every position is corrupted identically regardless
of role, and a confidence-based sampler reveals tokens in the reverse process.

\paragraph{Role exposure and budget.}
For a role-dependent schedule $\{\alpha_t^{(r)}\}$, let
$\rho_r=\mathbb E_t[\alpha_t^{(r)}]$ be the average exposure of role $r$ and
$\bar\rho=\sum_{r}\pi_r\rho_r$ the total masking budget. Uniform MDLM is the case
$\rho_r\equiv\bar\rho$. Fixing $\bar\rho$ equalizes the expected number of masked tokens,
so schedules sharing $\bar\rho$ are matched in budget and, under a shared backbone, in
compute.

\paragraph{Problem statement.}
Given $\mathcal R$, the frequencies $\{\pi_r\}$, and a fixed budget $\bar\rho$, we seek
the role-dependent schedule minimizing the expected graph-level reconstruction error at
equal budget,
\begin{equation}
\begin{aligned}
\min_{\rho}\ & \mathbb E[\Delta_{\mathrm{graph}}(\rho)]\\
\text{s.t.}\ & \textstyle\sum_{r}\pi_r\rho_r=\bar\rho,\quad
\rho_{\min}\le\rho_r\le\rho_{\max}.
\end{aligned}
\label{eq:problem}
\end{equation}
This reallocates a fixed budget across roles rather than adding noise or capacity; the
Theoretical Analysis makes $\mathbb E[\Delta_{\mathrm{graph}}]$ concrete and derives the
risk-optimal allocation.

\section{Role Sensitivity: A Diagnostic}
On a lightweight \emph{probe} denoiser $p_{\theta_0}$ trained with uniform masking, we
measure for each role $r$ the difficulty
$D_r=\mathbb{E}_{i:r_i=r}[-\log p_{\theta_0}(z_0^{(i)}\mid z_t,t)]$ and top-1 error $E_r$,
with $t\sim\mathcal{U}(0.05,0.95)$. Figure~\ref{fig:schedule}(a) reports a monotone, large
ordering, $\text{interface}\gg\text{interior}>\text{syntax}\gg\text{special}$ (probe NLL
$D_r=0.11/0.23/0.35$ nats and top-1 error $E_r=4.2/8.8/12.9\%$ for syntax/interior/interface;
special $\approx 0$). The fully trained uniform MDLM reproduces the \emph{same} role
ordering at larger absolute magnitudes (Table~\ref{tab:rolewise_reconstruction}), so it is
the ordering, not the probe's absolute scale, that we rely on.
This heterogeneity is the headroom \method{} exploits.

\section{\method{}}
Motivated by this role heterogeneity, we make the corruption schedule role-dependent.
Although we evaluate it on molecular graphs, the framework applies to any losslessly
serialized graph with identifiable token roles.
Figure~\ref{fig:schedule} previews the construction on real QM9 values.
Let $\Lambda(t)=\int_0^t\beta(s)\,ds$ be the base schedule and $\gamma_r>0$ a
per-role exponent. The absorbing process becomes role-dependent,
\begin{equation}
\alpha_t^{(r)}=1-\exp\!\big(-\gamma_r\,\Lambda(t)\big),
\label{eq:schedule}
\end{equation}
so $\gamma_r<1$ masks role $r$ less (visible longer, revealed earlier) and
$\gamma_r>1$ masks it more. Uniform $\gamma_r\equiv1$ recovers MDLM exactly. The
clean sequence, grammar, and decoder are unchanged, so $D_{\text{SENT}}(z_0)\cong G$
for any grammar-valid $z_0$ (Lemma~\ref{lem:decode}).

\begin{figure}[t]
\centering
\resizebox{\columnwidth}{!}{%
\begin{tikzpicture}[font=\footnotesize,x=1cm,y=1cm]
\draw[black!55] (0,0)--(0,2.30);
\draw[black!55] (0,0)--(4.05,0);
\draw[black!55] (4.05,0)--(4.05,2.30);
\foreach \v/\y in {0/0,0.1/0.6,0.2/1.2,0.3/1.8}{
  \draw[black!40] (-0.06,\y)--(0,\y);
  \node[left,font=\scriptsize,black!60] at (-0.06,\y) {\v};}
\foreach \v/\y in {0/0,5/0.733,10/1.467,15/2.200}{
  \draw[black!40] (4.05,\y)--(4.11,\y);
  \node[right,font=\scriptsize,black!60] at (4.11,\y) {\v};}
\node[rotate=90,measurecol] at (-0.64,1.15) {$D_r$ (NLL)};
\node[rotate=90,stepcol] at (4.80,1.15) {$E_r$ (\%)};
\foreach \xc/\nll/\err/\lab in {0.7/0.000/0.000/Spec, 1.6/0.636/0.616/Syn, 2.5/1.404/1.291/Intr, 3.4/2.118/1.892/Intf}{
  \fill[measurecol!75] ({\xc-0.32},0) rectangle ({\xc-0.02},\nll);
  \fill[stepcol!80]  ({\xc+0.02},0) rectangle ({\xc+0.32},\err);
  \node[below,black!75] at (\xc,0) {\lab};}
\node at (2.0,-0.62) {(a) measured difficulty};
\begin{scope}[xshift=5.7cm]
\draw[->,black!55] (0,0)--(0,2.5);
\draw[black!55] (0,0)--(3.1,0);
\node[rotate=90] at (-0.52,1.25) {$\gamma_r^\star$ ($\eta{=}2$)};
\draw[dashed,black!45] (0,0.85)--(3.1,0.85) node[right,font=\scriptsize,black!55] {$\gamma{=}1$};
\foreach \x/\h/\lab/\c in {0.7/0.44/Intf/traincol, 1.6/1.07/Intr/derivecol, 2.5/2.37/Syn/derivecol}{
  \fill[\c!70] ({\x-0.30},0) rectangle ({\x+0.30},{\h*0.85});
  \node[below,black!75] at (\x,0) {\lab};
  \node[above,font=\scriptsize,black!70] at (\x,{\h*0.85}) {\h};
}
\node at (1.6,-0.58) {(b) derived rate};
\end{scope}
\end{tikzpicture}}
\caption{Measured difficulty and the derived schedule, on real QM9 values.
\textbf{(a)} Measured role difficulty on the probe denoiser: probe NLL $D_r$
(teal, left axis) and top-1 reconstruction error $E_r$ (navy, right axis). Both rank
interface hardest and special trivial.
\textbf{(b)} Risk-optimal rates $\gamma_r^\star$ from Eq.~\ref{eq:smooth} at $\eta{=}2$
(Table~\ref{tab:eta}): the hardest role (interface) is \emph{protected} ($\gamma<1$)
and easier roles corrupted more ($\gamma>1$), at a fixed masking budget. The lossless
decoder is untouched.}
\label{fig:schedule}
\end{figure}

\paragraph{Inverse-exposure loss weighting.}
Protected (low-$\gamma$) roles are masked less often and would otherwise be
under-trained. We reweight the per-token loss by inverse exposure,
\begin{equation}
w(t,r_i)=\frac{1}{\Pr(i\in M_t\mid r_i)+\epsilon},
\label{eq:expweight}
\end{equation}
which is a no-op at $\gamma_r\equiv1$ ($w\equiv1$).

\paragraph{Role-aware sampling.}
Motif roles are not directly observed for masked tokens at generation time, but they
need not be: the denoiser already induces a distribution over the clean token at each
position, and the tokenizer supplies a training token--role prior $P(r\mid v)$. We
therefore estimate a \emph{soft} role distribution
$q_i(r)=\sum_v p_\theta(x_i{=}v\mid z_t)\,P(r\mid v)$ at every masked position $i$ and
carry the same predetermined role rates $\gamma_r$ used in training into the reverse
process, giving each position an expected exponent $\gamma_i=\sum_r q_i(r)\,\gamma_r$
and a role-aware reverse unmasking probability
$p^{\mathrm{unmask}}_i=(a_s^{(i)}-a_t^{(i)})/(1-a_t^{(i)})$ with
$a_t^{(i)}=\exp(-\gamma_i\Lambda(t))$. Positions are then revealed by the combined
confidence-and-schedule score $\log\max_v p_\theta(x_i{=}v\mid z_t)+\log
p^{\mathrm{unmask}}_i$. Algorithm~\ref{alg:sampling} states the full procedure; the
$\gamma_r$ are fixed before sampling, so reverse sampling reuses the derived schedule
end-to-end rather than deferring it to training alone.

\begin{algorithm}[t]
\caption{Role-aware sampling with predetermined role $\gamma$'s}
\label{alg:sampling}
\begin{algorithmic}[1]
\REQUIRE denoiser $p_\theta$; length $L$; reverse steps $\{t\!\to\!s\}$; role rates
  $\gamma_{\text{special}},\gamma_{\text{syntax}},\gamma_{\text{interior}},\gamma_{\text{interface}}$;
  token--role prior $P(r\mid v)$; noise schedule $\Lambda(\cdot)$; reveal counts $k_t$;
  \texttt{[MASK]}, \texttt{[BOS]}, \texttt{[EOS]}; clip floor $\epsilon$
\ENSURE generated graph $\hat G$
\STATE $z\gets[\texttt{[BOS]},\texttt{[MASK]},\dots,\texttt{[MASK]},\texttt{[EOS]}]$
\FOR{each reverse step $t\to s$}
  \STATE $M\gets\{$masked positions in $z\}\setminus\{\text{BOS},\text{EOS}\}$
  \IF{$M=\emptyset$}
    \STATE \textbf{break}
  \ENDIF
  \STATE $\mathrm{probs}\gets p_\theta(\cdot\mid z,t)$ \COMMENT{clean-token probabilities}
  \FORALL{$i\in M$}
    \STATE $q_i(r)\gets\sum_v \mathrm{probs}[i][v]\,P(r\mid v)$ \; for each role $r$
    \STATE $\gamma_i\gets\sum_r q_i(r)\,\gamma_r$
    \STATE $a_t\gets e^{-\gamma_i\Lambda(t)}$,\quad $a_s\gets e^{-\gamma_i\Lambda(s)}$
    \STATE $p^{\mathrm{unmask}}_i\gets\mathrm{clip}\!\big((a_s-a_t)/(1-a_t),\,\epsilon,\,1\big)$
    \STATE $\mathrm{conf}_i\gets\max_v \mathrm{probs}[i][v]$
    \STATE $\mathrm{score}_i\gets\log\mathrm{conf}_i+\log p^{\mathrm{unmask}}_i$
  \ENDFOR
  \STATE $S\gets$ top-$k_t$ positions of $M$ ranked by $\mathrm{score}_i$
  \FORALL{$i\in S$}
    \STATE $z[i]\gets\arg\max_v\mathrm{probs}[i][v]$ \;(greedy)\; \textbf{or}\; sample $\sim\mathrm{probs}[i]$
  \ENDFOR
  \STATE unselected positions remain \texttt{[MASK]}
\ENDFOR
\STATE $\hat G\gets \mathrm{decode}_{\text{SENT/mSENT}}(z)$; sanitize $\hat G$ with RDKit
\RETURN $\hat G$
\end{algorithmic}
\end{algorithm}

\paragraph{Computational overhead.}
\method{} reparameterizes the forward corruption rather than adding a module: identical
$3.32$M parameters, no extra forward passes, and only two $O(L)$ table operations per
training step (the per-token exponent $\gamma_{r_i}$ and the inverse-exposure weight) and
per reverse step, plus a one-time schedule solve over $|\mathcal R|{=}4$ roles. Wall-clock
therefore matches MDLM within run-to-run noise ($\sim\!1.25$ min/epoch; $87.4$ vs.\ $91.8$\,s
per $1000$ samples on one TITAN~RTX). Full comparisons appear in Supplementary Table~S3.

\section{Theoretical Analysis}
\label{sec:theory}

The theory formalizes one claim: if serialized roles differ in reconstruction difficulty
and graph-level impact, uniform masking is not risk-optimal, and a role-aware schedule
should protect high-criticality roles at an unchanged budget. Serialization is lossless
($T_\pi$ a bijection, so $H(Z^\pi)=H(G)$), so schedules matter only because the denoiser is
finite-capacity and roles are not equally easy to reconstruct.

\begin{lemma}[Decoding preserved]\label{lem:decode}
\method{} changes only the forward corruption kernel $q(z_t\mid z_0,r)$; the clean
sequence space, grammar, and decoder are unchanged. Hence, for any grammar-valid
$z_0$, $D_{\text{SENT}}(z_0)\cong G$. Setting $\gamma_r\equiv1$ recovers uniform MDLM
exactly (proof in Supplementary Section~S1).
\end{lemma}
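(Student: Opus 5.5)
The argument is structural. It separates what the method changes, namely the forward corruption kernel, from what decoding depends on, namely the clean sequence, the grammar and the decoder. It then checks the reduction to MDLM directly from Eq.~\ref{eq:schedule}.

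\textbf{Step 1: the kernel factorizes, and the role enters only through the per-token masking probability.} Conditional on $z_0$ and the role tags $r=(r_1,\dots,r_L)$, the role-aware forward process is
\begin{equation}
q(z_t\mid z_0,r)=\prod_{i=1}^{L}\Big[(1-\alpha_t^{(r_i)})\,\mathbf 1\{z_t^{(i)}=z_0^{(i)}\}+\alpha_t^{(r_i)}\,\mathbf 1\{z_t^{(i)}=\texttt{[MASK]}\}\Big].
\end{equation}
Three observations follow.
\begin{itemize}
\item The support of $z_t$ is contained in the set obtained from $z_0$ by masking some positions. No token is ever replaced by a different vocabulary symbol.
\item The roles $r_i$ are a deterministic function of $(z_0,\mu)$ computed by the motif-aware tokenizer. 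Conditioning on $r$ therefore adds no randomness to $z_0$.
\item The data distribution over clean sequences is the pushforward of the molecule distribution under $T(\cdot;\pi)$, and that map is not touched.
\end{itemize}
Hence the clean sequence space and the grammar are exactly those of SENT/mSENT. The denoiser's output space is the same vocabulary, and generation still terminates with a fully unmasked $z$ passed to the same $D_{\text{SENT}}$.

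\textbf{Step 2: losslessness transfers.} For any grammar-valid $z_0$, $z_0=T(G;\pi)$ for some $G$, and $D_{\text{SENT}}(z_0)\cong G$ by the losslessness of SENT, which the Notation paragraph states as $D(z^\pi)\cong G$. For mSENT, I would argue that the motif-aware ordering is just a particular choice of traversal $\pi$ within SENT's admissible traversals, with role tags as side annotations that are not emitted as tokens. The SENT decoder's correctness holds for every admissible $\pi$, so it still applies. This is the step I expect to be the main obstacle: one must verify that the motif-block ordering (contiguous blocks, then interface attachments) is realizable as a valid SENT traversal. I would first try to show this by exhibiting mSENT as a DFS/BFS order with a motif-consistent tie-breaking rule, and then invoke SENT's result for arbitrary traversal orders.

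\textbf{Step 3: the uniform reduction.} Setting $\gamma_r\equiv1$ in Eq.~\ref{eq:schedule} gives $\alpha_t^{(r)}=1-e^{-\Lambda(t)}$ for every $r$, which is the MDLM log-linear absorbing schedule. The kernel above then no longer depends on $r$ and coincides with the MDLM kernel.

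It remains to check every other place $\gamma$ appears.
\begin{itemize}
\item \emph{Loss weight.} $\Pr(i\in M_t\mid r_i)$ is the same for all roles, so Eq.~\ref{eq:expweight} is role-independent. It is absorbed into (or, per the paper's normalization, equals) the MDLM weight $w(t)$, so the objective reduces to Eq.~\ref{eq:mdlm}.
\item \emph{Sampler.} Since $\sum_r q_i(r)=1$, each position gets $\gamma_i=1$ in Algorithm~\ref{alg:sampling}. Then $p_i^{\mathrm{unmask}}$ is the same for all positions, the ranking score reduces to MDLM's confidence ordering, and the sampler is recovered exactly.
\end{itemize}
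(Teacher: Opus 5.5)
Your proposal is correct and follows the same structural argument as the paper's proof. Only the forward kernel $\alpha_t^{(r)}=1-\exp(-\gamma_r\Lambda(t))$ changes, while the clean sequences, grammar and $D_{\text{SENT}}$ are untouched, so losslessness carries over, and $\gamma_r\equiv1$ yields the token-agnostic schedule; your extra loss-weight and sampler checks go beyond the paper, which only claims the corruption process is recovered. The mSENT-as-admissible-traversal argument you flag as the main obstacle in Step~2 is not needed, since the paper takes losslessness of the serialization under the unchanged decoder as a standing premise (the Notation paragraph and the mSENT contribution), and the lemma asserts only that the role-aware corruption leaves that premise intact.
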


\paragraph{Role-decomposed graph risk.}
Let $I_r$ be the expected graph-level impact of a role-$r$ reconstruction error,
$\epsilon_r$ its residual reconstruction error under a finite-capacity denoiser, and
$\rho_r=\mathbb{E}_t[\alpha_t^{(r)}]$ its average corruption exposure. A role-$r$ token
harms the graph only when corrupted and then misreconstructed, so
\begin{equation}
\mathbb{E}[\Delta_{\mathrm{graph}}]
\le
\mathcal{R}(\rho)
:=
\sum_{r\in\mathcal R}
\pi_r I_r\epsilon_r\rho_r .
\label{eq:risk}
\end{equation}

The bound separates role frequency $\pi_r$, graph impact $I_r$, and reconstruction error
$\epsilon_r$, which we estimate by the measured probe NLL $D_r$.

\paragraph{Role criticality.}
Define the role criticality
\begin{equation}
c_r:=I_r\epsilon_r.
\label{eq:criticality}
\end{equation}
High-criticality roles are both hard to reconstruct and damaging when wrong; the product,
rather than a sum, keeps criticality low unless \emph{both} factors are large, so a role
extreme on only one axis does not dominate the allocation. We use the plug-in estimate
$C_r=\mathrm{Norm}(\widehat D_r)\cdot\mathrm{Norm}(\widehat I_r)$ from measured difficulty
$\widehat D_r$ and impact $\widehat I_r$.

\paragraph{Corruption as constrained allocation.}
Uniform MDLM assigns every role the same exposure $\bar\rho$; we instead minimize the
criticality-weighted exposure at the same budget:
\begin{equation}
\min_{\rho}
\sum_r \pi_r c_r\rho_r
\quad
\text{s.t.}
\quad
\sum_r\pi_r\rho_r=\bar\rho,
\qquad
\rho_{\min}\le\rho_r\le\rho_{\max}.
\label{eq:opt}
\end{equation}

\begin{theorem}[Risk-optimal exposure allocation]\label{thm:main}
Let $c_r:=I_r\epsilon_r\ge0$. Over the fixed-budget feasible set in
Eq.~\ref{eq:opt}, the minimizer has the water-filling form
\[
\rho_r^\star
=
\begin{cases}
\rho_{\min}, & c_r>\lambda \quad(\text{protect}),\\
\rho_{\max}, & c_r<\lambda \quad(\text{corrupt more}),\\
\in[\rho_{\min},\rho_{\max}], & c_r=\lambda,
\end{cases}
\]
for a budget multiplier $\lambda$. Consequently: (i)~$\rho_r^\star$ is non-increasing in
$c_r$; (ii)~uniform MDLM is optimal iff $c_r$ is constant across roles; (iii)~if
criticalities are heterogeneous and the feasible box permits a nonzero exposure shift from a
higher- to a lower-criticality role, then $\mathcal R(\rho^\star)<\mathcal R(\rho^{\mathrm{uni}})$;
and (iv)~since $\rho(\gamma)=\mathbb E_t[1-\exp(-\gamma\Lambda(t))]$ is strictly increasing in
$\gamma$, the optimal exponents $\gamma_r^\star=\rho^{-1}(\rho_r^\star)$ inherit the same
ordering.
\end{theorem}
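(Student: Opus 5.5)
The problem in Eq.~\ref{eq:opt} is a linear program over a box with one linear equality constraint, i.e.\ a continuous fractional-knapsack problem. I would therefore prove the water-filling form by LP duality, equivalently KKT, which is exact here because everything is linear. I assume $\pi_r>0$ and $\rho_{\min}\le\bar\rho\le\rho_{\max}$, so the feasible set is a nonempty compact polytope and a minimizer exists.

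\textbf{Characterization.} Form the Lagrangian
\[
L(\rho,\lambda)=\sum_r\pi_r c_r\rho_r-\lambda\Big(\sum_r\pi_r\rho_r-\bar\rho\Big)=\sum_r\pi_r(c_r-\lambda)\rho_r+\lambda\bar\rho ,
\]
and minimize it over the box coordinate by coordinate. Each coefficient $\pi_r(c_r-\lambda)$ has the sign of $c_r-\lambda$. So $\rho_r=\rho_{\min}$ if $c_r>\lambda$, $\rho_r=\rho_{\max}$ if $c_r<\lambda$, and $\rho_r$ is free when $c_r=\lambda$.

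To choose $\lambda$, sort the roles by decreasing $c_r$ and define
\[
\phi(\lambda)=\sum_{c_r>\lambda}\pi_r\rho_{\min}+\sum_{c_r<\lambda}\pi_r\rho_{\max}.
\]
This map is monotone. At the breakpoints $\lambda=c_r$ it jumps, and the jump can be filled by choosing the tied $\rho_r$ in $[\rho_{\min},\rho_{\max}]$. An intermediate-value argument then gives a $\lambda$ and tied values for which the budget holds with equality. Slater's condition holds trivially for a linear program, so strong duality applies. Concretely, for any feasible $\rho$,
\[
\sum_r\pi_r c_r\rho_r=L(\rho,\lambda)\ge L(\rho^\star,\lambda)=\sum_r\pi_r c_r\rho_r^\star .
\]
This shows the constructed $\rho^\star$ is optimal. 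Conversely, an exchange argument shows every minimizer has this form: if $c_a>c_b$ with $\rho_a>\rho_{\min}$ and $\rho_b<\rho_{\max}$, then moving mass $\delta/\pi_a$ out of $a$ and $\delta/\pi_b$ into $b$ keeps the budget and changes the objective by $-\delta(c_a-c_b)<0$.

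\textbf{Consequences.} Part (i) follows directly from the threshold form, with ties handled by the same exchange argument. For part (iii), exhibit one admissible transfer of exposure from a higher-criticality role to a lower-criticality role starting at $\rho^{\mathrm{uni}}$. The exchange computation above gives a strict decrease, and optimality gives $\mathcal R(\rho^\star)\le\mathcal R(\rho^{\mathrm{uni}})-\delta(c_a-c_b)<\mathcal R(\rho^{\mathrm{uni}})$.

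For part (ii):
\begin{itemize}
\item If all $c_r=c$, the objective equals $c\bar\rho$ on the whole feasible set, so the uniform schedule is optimal.
\item Conversely, if the $c_r$ are not constant and $\rho_{\min}<\bar\rho<\rho_{\max}$, uniform lies strictly inside the box. Part (iii) then applies, so uniform is not optimal.
\end{itemize}
I expect part (ii) to be the main obstacle. Its ``only if'' direction fails in the degenerate boxes $\bar\rho\in\{\rho_{\min},\rho_{\max}\}$, where uniform is the only feasible point. I would state the interior assumption explicitly, matching the hypothesis already built into (iii).

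For part (iv), differentiate $\rho(\gamma)=\mathbb E_t[1-e^{-\gamma\Lambda(t)}]$ under the expectation to get
\[
\rho'(\gamma)=\mathbb E_t\big[\Lambda(t)e^{-\gamma\Lambda(t)}\big]>0
\]
whenever $\Lambda(t)>0$ on a set of positive measure. Hence $\rho$ is strictly increasing and continuous, so $\rho^{-1}$ exists on its range and is increasing. It follows that $\gamma_r^\star$ has the same ordering as $\rho_r^\star$, and is therefore non-increasing in $c_r$. This requires that $[\rho_{\min},\rho_{\max}]$ lie inside the range of $\rho$, which I would state as an assumption.
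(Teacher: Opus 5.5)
Your proposal is correct and follows essentially the same route as the paper: a Lagrangian/KKT argument for the water-filling form, the explicit budget-preserving exchange $\rho_a=\bar\rho+\delta/\pi_a$, $\rho_b=\bar\rho-\delta/\pi_b$ away from $\rho^{\mathrm{uni}}$ for (ii)--(iii), and $\partial_\gamma\bigl(1-e^{-\gamma\Lambda(t)}\bigr)=\Lambda(t)e^{-\gamma\Lambda(t)}>0$ for (iv). You differ only in added care: you construct $\lambda$ explicitly and certify optimality with the weak-duality inequality chain (plus an exchange argument for necessity) rather than citing box multipliers and complementary slackness, and you state the range condition on $\rho^{-1}$ that the paper leaves implicit; your interior-budget caveat for (ii) is exactly the paper's standing assumption $\bar\rho\in(\rho_{\min},\rho_{\max})$.
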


\noindent\emph{Proof sketch.} The objective is linear on the budget box, so KKT stationarity
gives the water-filling vertex and monotonicity of $\rho(\gamma)$ transfers the ordering to
the exponents; full proof in Supplementary Section~S1.

\begin{corollary}[Uniform null]\label{cor:null}
If $c_r$ is constant the optimum is uniform, so \method{} cannot improve the modeled risk;
conversely, any strict reduction in Eq.~\ref{eq:risk} certifies heterogeneous criticality.
\end{corollary}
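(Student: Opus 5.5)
The statement to prove is the Corollary (Uniform null), which I would derive directly from Theorem~\ref{thm:main} and the linear form of the modeled risk in Eq.~\ref{eq:risk}. \emph{First direction:} suppose $c_r\equiv c$ for all roles. Then on the feasible set of Eq.~\ref{eq:opt},
\[
\mathcal R(\rho)=\sum_r\pi_r c\,\rho_r=c\,\bar\rho .
\]
So the objective is constant over the whole budget box. Every feasible $\rho$ is a minimizer, and in particular the uniform schedule $\rho_r\equiv\bar\rho$ is one. Feasibility of the uniform point needs $\rho_{\min}\le\bar\rho\le\rho_{\max}$, which holds whenever the feasible set is nonempty, since $\bar\rho$ is a convex combination of the $\rho_r$ (using $\sum_r\pi_r=1$). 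Hence no schedule, and in particular \method{}, can achieve $\mathcal R(\rho)<\mathcal R(\rho^{\mathrm{uni}})$. This is also item (ii) of Theorem~\ref{thm:main} in the "if" direction, and I would cite it directly.

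\emph{Converse:} I would argue by contraposition. Suppose some feasible $\rho$ satisfies $\mathcal R(\rho)<\mathcal R(\rho^{\mathrm{uni}})$. If the $c_r$ were constant, the computation above gives $\mathcal R(\rho)=c\bar\rho=\mathcal R(\rho^{\mathrm{uni}})$, a contradiction. So the criticalities must be heterogeneous.

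The only subtlety, and the place I expect care to be needed, is bookkeeping rather than mathematics. First, roles with $\pi_r=0$ do not enter either $\mathcal R$ or the budget, so "constant" should be read over roles with $\pi_r>0$. Second, the statement concerns the \emph{modeled} risk $\mathcal R$, not $\mathbb E[\Delta_{\mathrm{graph}}]$, which Eq.~\ref{eq:risk} only upper-bounds. I would state both restrictions explicitly so the certification claim is not overread as a statement about true graph error.
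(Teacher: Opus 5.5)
Your proposal is correct and follows the paper's route: with $c_r\equiv c$ the objective collapses to $c\bar\rho$ on the budget-constrained set, as in the item (ii) computation in the proof of Theorem~\ref{thm:main}, so uniform MDLM is optimal, and the converse is the immediate contrapositive. Your extra remarks are sound bookkeeping: feasibility of $\bar\rho$, reading ``constant'' over roles with $\pi_r>0$, and taking the claim to concern the modeled risk $\mathcal R$ rather than $\mathbb E[\Delta_{\mathrm{graph}}]$; the paper handles the first two by simply assuming $\pi_r>0$ and $\bar\rho\in(\rho_{\min},\rho_{\max})$.
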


\paragraph{Smooth risk-optimal schedule.}
The threshold rule of Theorem~\ref{thm:main} is too sharp for training, so we add a
convex penalty against large deviations from uniform MDLM:
\begin{equation}
\min_{\rho}
\sum_r\pi_r c_r\rho_r
+
\frac{1}{\eta}
\sum_r
\pi_r
\mathrm{KL}(\rho_r\|\bar\rho)
\quad
\text{s.t.}
\quad
\sum_r\pi_r\rho_r=\bar\rho .
\label{eq:regularized-opt}
\end{equation}
\begin{proposition}[Smooth risk-optimal schedule]\label{prop:smooth}
The regularized objective in Eq.~\ref{eq:regularized-opt} is strictly convex on
$(0,1)^{|\mathcal R|}$ and has the unique minimizer
\begin{equation}
\rho_r^\star
=
\sigma\!\left(
\sigma^{-1}(\bar\rho)+\eta(\lambda-c_r)
\right),
\label{eq:smooth}
\end{equation}
where $\lambda$ enforces the budget constraint and $\sigma$ is the logistic sigmoid.
Moreover $\rho_r^\star$ is strictly decreasing in the criticality $c_r$.
\end{proposition}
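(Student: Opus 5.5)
We read $\mathrm{KL}(\rho_r\|\bar\rho)$ as the Bernoulli divergence
\[
\mathrm{KL}(\rho_r\|\bar\rho)=\rho_r\log\frac{\rho_r}{\bar\rho}+(1-\rho_r)\log\frac{1-\rho_r}{1-\bar\rho}.
\]
With this reading, the plan is to treat Eq.~\ref{eq:regularized-opt} as a separable strictly convex program with one linear equality constraint. We will establish strict convexity, solve the KKT system in closed form, and then show the multiplier is unique and the solution is monotone in $c_r$.

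\textbf{Step 1 (strict convexity).} For each $r$, the term $\pi_r c_r\rho_r$ is linear in $\rho_r$. The second derivative of the Bernoulli KL in $\rho_r$ is $1/\rho_r+1/(1-\rho_r)=1/(\rho_r(1-\rho_r))>0$ on $(0,1)$. Since $\pi_r>0$ and $\eta>0$, the objective is a sum of strictly convex univariate functions. Its Hessian is therefore diagonal with entries $\pi_r/(\eta\rho_r(1-\rho_r))>0$, so the objective is strictly convex on $(0,1)^{|\mathcal R|}$. Intersecting with the affine budget constraint gives a convex feasible set on which any minimizer is unique.

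\textbf{Step 2 (stationarity).} We form the Lagrangian with multiplier $\lambda$ on $\sum_r\pi_r\rho_r=\bar\rho$. The sign convention $+\pi_r\lambda(\bar\rho-\rho_r)$ is chosen to match Eq.~\ref{eq:smooth}. Using $\partial_{\rho}\mathrm{KL}(\rho\|\bar\rho)=\sigma^{-1}(\rho)-\sigma^{-1}(\bar\rho)$ and dividing out $\pi_r$, stationarity gives
\[
c_r+\tfrac1\eta\big(\sigma^{-1}(\rho_r)-\sigma^{-1}(\bar\rho)\big)-\lambda=0 .
\]
Solving for $\rho_r$ yields exactly Eq.~\ref{eq:smooth}. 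Because the KL gradient diverges to $\mp\infty$ at the endpoints $0$ and $1$, the minimizer lies in the open box, so no boundary multipliers arise. For a convex problem with an affine constraint, KKT conditions are sufficient as well as necessary.

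\textbf{Step 3 (existence and uniqueness of $\lambda$).} This is the main point needing care. Define
\[
g(\lambda)=\sum_r\pi_r\,\sigma\big(\sigma^{-1}(\bar\rho)+\eta(\lambda-c_r)\big).
\]
Each summand is continuous and strictly increasing in $\lambda$. Since $\sum_r\pi_r=1$, $g(\lambda)$ tends to $0$ as $\lambda\to-\infty$ and to $1$ as $\lambda\to+\infty$. By the intermediate value theorem there is a unique $\lambda$ with $g(\lambda)=\bar\rho$ for any $\bar\rho\in(0,1)$. That $\lambda$ gives the unique minimizer, consistent with Step~1.

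\textbf{Step 4 (monotonicity).} With $\lambda$ fixed, as a common constant across roles, $\rho_r^\star$ is $\sigma$ applied to an expression strictly decreasing in $c_r$. Since $\sigma$ is strictly increasing and $\eta>0$, $\rho_r^\star$ is strictly decreasing in $c_r$. When all $c_r$ are equal, the budget forces $\lambda=c_r$ and hence $\rho_r^\star=\bar\rho$, consistent with Corollary~\ref{cor:null}.
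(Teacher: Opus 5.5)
Your proposal is correct and follows the same route as the paper's proof: you read the KL as Bernoulli, use the second derivative $1/(\rho(1-\rho))$ for strict convexity, put the logit-difference derivative into the Lagrangian stationarity condition, solve for $\rho_r^\star$, and obtain monotonicity from the increasing sigmoid at a common $\lambda$. Your Step~3 (a unique $\lambda$, because the budget map $g$ is strictly increasing with limits $0$ and $1$) and your interiority remark are useful additions the paper leaves implicit, since they are what guarantee that a minimizer actually exists in the open box.
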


Higher-criticality roles thus receive lower exposure, and $\eta$ interpolates from uniform
MDLM ($\eta\to0$) to the threshold rule of Theorem~\ref{thm:main} ($\eta\to\infty$),
setting the magnitude of reallocation, not its direction.

Two further guarantees are deferred to Supplementary Section~S1 (Theorems~S1 and~S2, with
all proofs): the smooth allocation is stable to noisy criticality estimates
($\max_r|\widehat\rho_r^\star-\rho_r^\star|=O(\eta\delta)$ for $\max_r|\widehat c_r-c_r|\le\delta$),
and a Rademacher bound shows that at matched architecture, data size, and training budget,
schedule comparisons are governed by the empirical role-weighted risk
$\widehat{\mathcal R}(\rho)=\sum_r\pi_r\widehat I_r\widehat\epsilon_r(\rho_r)$.

\paragraph{Robustness to $\eta$.}
The only free hyperparameter is $\eta$; because Eq.~\ref{eq:smooth} is monotone in $c_r$,
Table~\ref{tab:eta} confirms interface stays protected across datasets and every $\eta>0$.
\begin{table*}[!t]
\centering
\small
\setlength{\tabcolsep}{5pt}
\renewcommand{\arraystretch}{1.08}

\caption{Sensitivity of the derived $\gamma_r$ to $\eta$
(Eq.~\ref{eq:smooth}), re-derived from the measured criticalities
without retraining. For all $\eta>0$, the interface role is protected
($\gamma_r<1$), whereas the interior and syntax roles receive greater
corruption ($\gamma_r>1$). Thus, $\eta$ controls the magnitude of the
allocation rather than its direction. The setting $\eta=0$ corresponds
to uniform MDLM; we use $\eta=2$ (highlighted).}
\label{tab:eta}

\begin{tabular}{@{}l *{9}{c}@{}}
\toprule
& \multicolumn{3}{c}{QM9}
& \multicolumn{3}{c}{MOSES}
& \multicolumn{3}{c}{GuacaMol} \\
\cmidrule(lr){2-4}
\cmidrule(lr){5-7}
\cmidrule(l){8-10}

$\eta$
& Interior & Interface & Syntax
& Interior & Interface & Syntax
& Interior & Interface & Syntax \\
\midrule

$0.0$
& 1.00 & 1.00 & 1.00
& 1.00 & 1.00 & 1.00
& 1.00 & 1.00 & 1.00 \\

$0.5$
& 1.02 & 0.81 & 1.24
& 1.08 & 0.72 & 1.05
& 1.06 & 0.78 & 1.05 \\

$1.0$
& 1.03 & 0.66 & 1.54
& 1.17 & 0.52 & 1.09
& 1.13 & 0.61 & 1.11 \\

\ourrow
$2.0$
& 1.07 & 0.44 & 2.37
& 1.34 & 0.27 & 1.17
& 1.26 & 0.37 & 1.22 \\

$4.0$
& 1.13 & 0.25 & 4.00
& 1.59 & 0.25 & 1.22
& 1.49 & 0.25 & 1.40 \\

$8.0$
& 1.22 & 0.25 & 4.00
& 1.79 & 0.25 & 1.05
& 1.68 & 0.25 & 1.49 \\

\bottomrule
\end{tabular}
\end{table*}
\section{Experimental Setup}
\paragraph{Datasets.}
We evaluate on QM9~\cite{ramakrishnan2014quantum} (molecules up to nine heavy atoms) and
MOSES~\cite{polykovskiy2020molecular} (a larger, more diverse drug-like set), and use
GuacaMol~\cite{brown2019guacamol} as a third benchmark to test whether the derived schedule
and role-wise diagnostics generalize beyond the two generation datasets. All are serialized
losslessly by the motif-aware tokenizer with the per-token role labels that drive the
schedule.

\paragraph{Backbone and protocol.}
All conditions share one masked discrete-diffusion backbone under matched architecture,
optimizer, and training budget, so differences reflect the corruption and sampling schedule
rather than capacity or training duration. \method{} samples with the role-aware confidence
sampler (Algorithm~\ref{alg:sampling}) using the inferred token-role distribution and
role-specific rates, whereas uniform MDLM uses the standard confidence sampler with a single
token-agnostic schedule. Unless noted, we generate $5{,}000$ samples per condition and decode
with the same lossless decoder.

\paragraph{Metrics.}
We report validity (RDKit-sanitizable fraction), uniqueness, novelty, atom stability, QED,
and role-wise reconstruction NLL and top-1 error, with distributional fidelity measured by
Fr\'echet ChemNet Distance (FCD; lower is better) against a \emph{held-out} reference split.
Complete architecture, optimization, schedule-construction, metric, hardware, and seed
details are provided in Supplementary Section~S2.

\section{Results}
\label{sec:results}

We organize the empirical evaluation around the four research questions posed in the
Introduction, treating uniform MDLM as the flat-criticality special case of \method{}
throughout.

\subsection{Are Token Roles Heterogeneous? (RQ1)}
Role-aware corruption is justified only if the token roles are not interchangeable. On a
uniformly masked probe denoiser the roles separate sharply in reconstruction difficulty
(Table~\ref{tab:rolewise_reconstruction}): interface is the hardest role on every dataset
(QM9 NLL $0.497$ and top-1 error $0.166$, against $0.120$ and $0.045$ for syntax; MOSES
interface NLL $0.712$; GuacaMol interface NLL $0.903$), with interior consistently in
between, and Figure~\ref{fig:schedule}(a) shows the same ordering on the probe. The
single-token perturbation impact (Table~\ref{tab:single_token_perturbation}) ranks the roles
in the same direction: interface is the highest-impact role ($I_r=0.759$, 95\% CI
$[0.742,0.776]$), above interior ($0.620$) and syntax ($0.566$), because corrupting a
cross-motif attachment token most often produces an invalid or disconnected graph. The two
axes agree that interface is most critical, though difficulty separates the roles far more
strongly (roughly fourfold) than impact ($\sim\!1.3\times$). Roles thus differ significantly
along both axes on all three datasets, answering RQ1.
\begin{table}[t]
\centering
\caption{Single-token perturbation impact by role (QM9, 2430 trials/role, 95\% CI).
Impact $= P(\text{invalid or disconnected})$. The ordering is
interface $>$ interior $>$ syntax: interface is the highest-impact role,
because corrupting a cross-motif attachment token most often yields an
invalid or disconnected graph.}
\label{tab:single_token_perturbation}

\fontsize{8}{9.5}\selectfont
\setlength{\tabcolsep}{4pt}
\renewcommand{\arraystretch}{1.12}

\begin{tabular}{@{}lccccc@{}}
\toprule
Role &
Impact &
95\% CI &
\\
\midrule

Interface &
$\mathbf{0.759}$ &
$[0.742,\,0.776]$ & \\

Interior &
$0.6198$ &
$[0.605,\,0.632]$ & \\

Syntax &
$0.566$ &
$[0.552,\,0.580]$ & \\

\bottomrule
\end{tabular}
\end{table}

\subsection{Is the Schedule Derived, Not Tuned? (RQ2)}
Given heterogeneous criticalities, does the risk-optimal rule of Theorem~\ref{thm:main}
yield a usable schedule rather than a hand-set one? Table~\ref{tab:eta} instantiates
Eq.~\ref{eq:opt}: from measured difficulty $D_r$ and impact $I_r$ we form
$C_r=\mathrm{Norm}(D_r)\,\mathrm{Norm}(I_r)$ and solve for $\gamma_r$. On QM9, MOSES, and
GuacaMol the top-criticality role is interface, so at every $\eta>0$ the schedule protects it
($\gamma_{\text{interface}}<1$; e.g.\ $0.44$, $0.27$, and $0.37$ respectively at $\eta=2$) and
corrupts the lower-criticality interior and syntax roles more at fixed budget. The allocation
direction agrees across all three datasets, indicating $C_r$ tracks a structural signal rather
than dataset noise. As established in RQ1, difficulty dominates the criticality ranking, so
interface is protected because it is both hardest to reconstruct and highest-impact. The
schedule is thus derived rather than tuned, answering RQ2.

\begin{table}[!htb]
\centering\small
\caption{Role-wise reconstruction performance under uniform evaluation masking
($1000$ graphs/checkpoint). We report negative log-likelihood (NLL) and top-1
reconstruction error for each structural role on QM9, MOSES, and GuacaMol.}
\label{tab:rolewise_reconstruction}
\begin{tabular}{@{}lllcc@{}}
\toprule
Dataset & Method & Role & NLL $\downarrow$ & Top-1 err. $\downarrow$ \\
\midrule

\multirow{6}{*}{QM9}
& \multirow{3}{*}{Uniform MDLM}
  & interior  & 0.261 & 0.095 \\
& & interface & 0.497 & 0.166 \\
& & syntax    & 0.120 & 0.045 \\
\cmidrule(l){2-5}
& \multirow{3}{*}{\method{}}
  & interior  & 0.222 & 0.085 \\
& & interface & 0.354 & 0.132 \\
& & syntax    & 0.097 & 0.039 \\

\midrule

\multirow{6}{*}{MOSES}
& \multirow{3}{*}{Uniform MDLM}
  & interior  & 0.406 & 0.141 \\
& & interface & 0.712 & 0.247 \\
& & syntax    & 0.590 & 0.201 \\
\cmidrule(l){2-5}
& \multirow{3}{*}{\method{}}
  & interior  & 0.390 & 0.136 \\
& & interface & 0.708 & 0.202 \\
& & syntax    & 0.271 & 0.105 \\
\midrule

\multirow{6}{*}{Guacamol}
& \multirow{3}{*}{Uniform MDLM}
  & interior  & 0.528 & 0.180 \\
& & interface & 0.903 & 0.292 \\
& & syntax    & 0.435 & 0.166 \\
\cmidrule(l){2-5}
& \multirow{3}{*}{\method{}}
  & interior  & 0.480 & 0.167 \\
& & interface & 0.899 & 0.286 \\
& & syntax    & 0.314 & 0.118 \\

\bottomrule
\end{tabular}
\end{table}

\subsection{Does It Beat Uniform Masking? (RQ3)}
Theorem~\ref{thm:main} predicts that the heterogeneous criticality established in RQ1 makes
uniform masking suboptimal. Holding architecture, optimizer, training budget, and compute
fixed isolates the corruption and sampling schedule as the only difference from uniform MDLM.
The derived schedule improves validity on all three datasets (QM9 $0.905\!\to\!0.944$,
MOSES $0.920\!\to\!0.938$, GuacaMol $0.787\!\to\!0.841$) and lowers FCD wherever it is
measured (QM9 $1.701\!\to\!1.609$, MOSES $2.125\!\to\!1.850$; FCD and novelty are not
computed on GuacaMol). Uniqueness improves on MOSES and stays high on QM9 and GuacaMol, while
the small novelty dip on QM9 and MOSES reflects the fidelity--diversity trade.
Figure~\ref{fig:multiseed} shows these gains are stable across five seeds, with tight,
largely non-overlapping intervals. Since the
schedule is derived from measured criticalities rather than chosen arbitrarily (RQ2), this
consistent improvement across three datasets is in line with the prediction and reflects
informed structural allocation. A matched-compute staged ablation (Supplementary Table~S2)
separates the mSENT tokenizer gain from the additional gain of the full role-aware
configuration; since that configuration enables the derived schedule and inverse-exposure
weighting jointly, we do not isolate their individual effects.

\begin{figure*}[t]
\centering
\resizebox{\textwidth}{!}{%
\begin{tikzpicture}[font=\footnotesize,x=1cm,y=1cm]
\begin{scope}
  \draw[black!55] (0.15,0)--(5.30,0);
  \draw[black!55] (0.15,0)--(0.15,3.45);
  \foreach \v/\y in {0/0,0.25/0.825,0.5/1.65,0.75/2.475,1/3.3}{
    \draw[black!40] (0.08,\y)--(0.15,\y);
    \node[left,font=\scriptsize,black!60] at (0.08,\y) {\v};}
  \draw[black!55] (5.30,0)--(5.30,3.45);
  \foreach \v/\y in {0/0,1/1.32,2/2.64}{
    \draw[black!40] (5.30,\y)--(5.37,\y);
    \node[right,font=\scriptsize,black!60] at (5.37,\y) {\v};}
  \draw[black!20,dashed] (3.90,0)--(3.90,3.45);
  \foreach \x/\h/\c in {0.63/2.987/0.020, 1.83/3.224/0.013, 3.03/2.577/0.059, 4.43/2.245/0.106}{
    \fill[black!35] (\x-0.14,0) rectangle (\x+0.14,\h);
    \draw[black!70,line width=0.4pt] (\x,\h-\c)--(\x,\h+\c);
    \draw[black!70,line width=0.4pt] (\x-0.05,\h+\c)--(\x+0.05,\h+\c);
    \draw[black!70,line width=0.4pt] (\x-0.05,\h-\c)--(\x+0.05,\h-\c);}
  \foreach \x/\h/\c in {0.97/3.115/0.026, 2.17/3.148/0.010, 3.37/2.429/0.050, 4.77/2.124/0.085}{
    \fill[derivecol!85] (\x-0.14,0) rectangle (\x+0.14,\h);
    \draw[black!70,line width=0.4pt] (\x,\h-\c)--(\x,\h+\c);
    \draw[black!70,line width=0.4pt] (\x-0.05,\h+\c)--(\x+0.05,\h+\c);
    \draw[black!70,line width=0.4pt] (\x-0.05,\h-\c)--(\x+0.05,\h-\c);}
  \foreach \x/\lab in {0.8/Valid, 2.0/Unique, 3.2/Novel, 4.6/FCD}{
    \node[below,font=\scriptsize,black!75] at (\x,-0.04) {\lab};}
  \node[rotate=90,font=\scriptsize,black!65] at (-0.62,1.65) {rate $\uparrow$};
  \node[rotate=90,font=\scriptsize,black!65] at (5.98,1.32) {FCD $\downarrow$};
  \node[font=\small\bfseries] at (2.7,3.72) {QM9};
\end{scope}
\begin{scope}[xshift=7.3cm]
  \draw[black!55] (0.15,0)--(5.30,0);
  \draw[black!55] (0.15,0)--(0.15,3.45);
  \foreach \v/\y in {0/0,0.25/0.825,0.5/1.65,0.75/2.475,1/3.3}{
    \draw[black!40] (0.08,\y)--(0.15,\y);
    \node[left,font=\scriptsize,black!60] at (0.08,\y) {\v};}
  \draw[black!55] (5.30,0)--(5.30,3.45);
  \foreach \v/\y in {0/0,1/1.32,2/2.64}{
    \draw[black!40] (5.30,\y)--(5.37,\y);
    \node[right,font=\scriptsize,black!60] at (5.37,\y) {\v};}
  \draw[black!20,dashed] (3.90,0)--(3.90,3.45);
  \foreach \x/\h/\c in {0.63/3.036/0.026, 1.83/3.119/0.010, 3.03/3.287/0.023, 4.43/2.805/0.127}{
    \fill[black!35] (\x-0.14,0) rectangle (\x+0.14,\h);
    \draw[black!70,line width=0.4pt] (\x,\h-\c)--(\x,\h+\c);
    \draw[black!70,line width=0.4pt] (\x-0.05,\h+\c)--(\x+0.05,\h+\c);
    \draw[black!70,line width=0.4pt] (\x-0.05,\h-\c)--(\x+0.05,\h-\c);}
  \foreach \x/\h/\c in {0.97/3.095/0.010, 2.17/3.254/0.017, 3.37/3.023/0.050, 4.77/2.442/0.042}{
    \fill[derivecol!85] (\x-0.14,0) rectangle (\x+0.14,\h);
    \draw[black!70,line width=0.4pt] (\x,\h-\c)--(\x,\h+\c);
    \draw[black!70,line width=0.4pt] (\x-0.05,\h+\c)--(\x+0.05,\h+\c);
    \draw[black!70,line width=0.4pt] (\x-0.05,\h-\c)--(\x+0.05,\h-\c);}
  \foreach \x/\lab in {0.8/Valid, 2.0/Unique, 3.2/Novel, 4.6/FCD}{
    \node[below,font=\scriptsize,black!75] at (\x,-0.04) {\lab};}
  \node[rotate=90,font=\scriptsize,black!65] at (-0.62,1.65) {rate $\uparrow$};
  \node[rotate=90,font=\scriptsize,black!65] at (5.98,1.32) {FCD $\downarrow$};
  \node[font=\small\bfseries] at (2.7,3.72) {MOSES};
\end{scope}
\begin{scope}[xshift=14.6cm]
  \draw[black!55] (0.15,0)--(5.30,0);
  \draw[black!55] (0.15,0)--(0.15,3.45);
  \foreach \v/\y in {0/0,0.25/0.825,0.5/1.65,0.75/2.475,1/3.3}{
    \draw[black!40] (0.08,\y)--(0.15,\y);
    \node[left,font=\scriptsize,black!60] at (0.08,\y) {\v};}
  \draw[black!55] (5.30,0)--(5.30,3.45);
  \foreach \v/\y in {0/0,1/1.32,2/2.64}{
    \draw[black!40] (5.30,\y)--(5.37,\y);
    \node[right,font=\scriptsize,black!60] at (5.37,\y) {\v};}
  \draw[black!20,dashed] (3.90,0)--(3.90,3.45);
  \foreach \x/\h/\c in {0.63/2.597/0.053, 1.83/3.300/0.000}{
    \fill[black!35] (\x-0.14,0) rectangle (\x+0.14,\h);
    \draw[black!70,line width=0.4pt] (\x,\h-\c)--(\x,\h+\c);
    \draw[black!70,line width=0.4pt] (\x-0.05,\h+\c)--(\x+0.05,\h+\c);
    \draw[black!70,line width=0.4pt] (\x-0.05,\h-\c)--(\x+0.05,\h-\c);}
  \foreach \x/\h/\c in {0.97/2.775/0.026, 2.17/3.247/0.003}{
    \fill[derivecol!85] (\x-0.14,0) rectangle (\x+0.14,\h);
    \draw[black!70,line width=0.4pt] (\x,\h-\c)--(\x,\h+\c);
    \draw[black!70,line width=0.4pt] (\x-0.05,\h+\c)--(\x+0.05,\h+\c);
    \draw[black!70,line width=0.4pt] (\x-0.05,\h-\c)--(\x+0.05,\h-\c);}
  \node[font=\scriptsize\itshape,black!45] at (3.2,0.32) {n/a};
  \node[font=\scriptsize\itshape,black!45] at (4.6,0.32) {n/a};
  \foreach \x/\lab in {0.8/Valid, 2.0/Unique, 3.2/Novel, 4.6/FCD}{
    \node[below,font=\scriptsize,black!75] at (\x,-0.04) {\lab};}
  \node[rotate=90,font=\scriptsize,black!65] at (-0.62,1.65) {rate $\uparrow$};
  \node[rotate=90,font=\scriptsize,black!65] at (5.98,1.32) {FCD $\downarrow$};
  \node[font=\small\bfseries] at (2.7,3.72) {GuacaMol};
\end{scope}
\fill[black!35] (7.20,4.02) rectangle (7.48,4.20);
\node[right,font=\scriptsize,black!80] at (7.53,4.11) {Uniform MDLM};
\fill[derivecol!85] (10.60,4.02) rectangle (10.88,4.20);
\node[right,font=\scriptsize,black!80] at (10.93,4.11) {\method{} (risk-optimal)};
\end{tikzpicture}}
\caption{\textbf{Multi-seed robustness} on QM9, MOSES, and GuacaMol ($5$ seeds, $5000$
samples/seed; whiskers are $95\%$ confidence intervals). Validity, uniqueness, and
novelty (\emph{left} axis, higher is better) and FCD (\emph{right} axis, lower is
better) for uniform MDLM (grey) vs.\ \method{} (amber). \method{} improves validity on
\emph{all three} datasets and FCD on QM9 and MOSES, with tight, largely non-overlapping
intervals; the novelty dip on QM9 and MOSES reflects the fidelity--diversity trade
discussed in the text. Novelty and FCD are not computed on GuacaMol (marked n/a), where
uniqueness is already saturated. FCD is computed against the held-out test reference.
Exact five-seed means and $95\%$ confidence intervals are reported in Supplementary
Table~S1.}
\label{fig:multiseed}
\end{figure*}

\paragraph{Stability and drug-likeness.}
Table~\ref{tab:stability} completes the metric set: \method{} matches or improves atom
stability (QM9 $0.946\!\to\!0.955$; MOSES $0.902\!\to\!0.915$; GuacaMol $0.796\!\to\!0.839$)
and QED (QM9 $0.419\!\to\!0.455$; MOSES $0.822\!\to\!0.837$; GuacaMol $0.438\!\to\!0.634$),
consistent with a regularizer that preserves difficult-role accuracy, with the largest QED
gain on the more diverse GuacaMol set.

\begin{table}[!htb]
\centering
\small
\setlength{\tabcolsep}{6pt}
\renewcommand{\arraystretch}{1.12}
\caption{Atom stability and QED results on QM9, MOSES, and GuacaMol. We compare Uniform
MDLM and \method{} using $5000$ generated samples.}
\label{tab:stability}
\begin{tabular}{@{}llcc@{}}
\toprule
Dataset & Method & Atom stab.\ $\uparrow$ & QED $\uparrow$ \\
\midrule

QM9
& Uniform MDLM
& $0.946{\scriptscriptstyle\pm.082}$
& $0.419{\scriptscriptstyle\pm.006}$ \\

\rowcolor{bestgreen!18}
& \method{}
& $0.955{\scriptscriptstyle\pm.004}$
& $0.455{\scriptscriptstyle\pm.005}$ \\

\midrule

MOSES
& Uniform MDLM
& $0.902{\scriptscriptstyle\pm.012}$
& $0.822{\scriptscriptstyle\pm.009}$ \\

\rowcolor{bestgreen!18}
& \method{}
& $0.915{\scriptscriptstyle\pm.007}$
& $0.837{\scriptscriptstyle\pm.003}$ \\

\midrule

Guacamol (100k)
& Uniform MDLM
& $0.796{\scriptscriptstyle\pm.002}$
& $0.438{\scriptscriptstyle\pm.005}$ \\

\rowcolor{bestgreen!18}
& \method{}
& $0.839{\scriptscriptstyle\pm.007}$
& $0.634{\scriptscriptstyle\pm.006}$ \\

\bottomrule
\end{tabular}
\end{table}

\paragraph{Where the gain comes from.}
The role-wise diagnostics (Table~\ref{tab:rolewise_reconstruction}) show \method{}
redistributes reconstruction capacity toward high-criticality roles rather than improving
all roles equally. On QM9 all three roles improve, most on the protected interface (NLL
$0.497\!\to\!0.354$, top-1 $0.166\!\to\!0.132$); on MOSES interface NLL is essentially flat
($0.712\!\to\!0.708$) while its top-1 improves ($0.247\!\to\!0.202$) and interior and syntax
improve on both metrics. GuacaMol shows the same pattern: interface stays near baseline
(NLL $0.903\!\to\!0.899$) while the more frequently corrupted syntax role improves sharply
(NLL $0.435\!\to\!0.314$), and interior improves on both metrics. One plausible mechanism is
that protecting the harder interface
tokens, which are corrupted less and therefore remain visible longer during denoising,
improves the context available for reconstructing the remaining interior and syntax tokens,
so the gains are not confined to the protected role. This redistribution is exactly the
reduction in modeled role-weighted risk the theory targets; the FCD, validity, and
uniqueness gains remain empirical downstream effects.

\subsection{Does mSENT Improve Motif Locality? (RQ4)}
\begin{figure}[!htb]
\centering
\begin{tikzpicture}[x=1cm,y=1cm]
  \def\sc{0.55}          
  \def\bh{0.17}          
  \foreach \p/\lab in {0/0,2/2,4/4,6/6}{
    \draw[gray!25] (\p*\sc,0.32) -- (\p*\sc,-4.52);
    \node[gray!70,font=\scriptsize] at (\p*\sc,-4.78) {\lab};
  }
  \draw[gray!55] (0,0.32) -- (0,-4.52);
  \draw[gray!55] (0,-4.52) -- (4.35,-4.52);
  \foreach \y/\w/\name/\pct in {%
      0.0/4.15/Intra-edge distortion/+7.5,
      -0.6/3.99/Boundary alignment/+7.3,
      -1.2/3.67/Same-motif adjacency/+6.7,
      -1.8/2.31/Motif span/+4.2,
      -2.4/1.61/Compactness/+2.9,
      -3.0/0.73/Switch rate/+1.3,
      -3.6/0.55/Fragmentation/+1.0,
      -4.2/0.40/Motif transitions/+0.7}{
    \fill[traincol!85] (0,\y-\bh) rectangle (\w,\y+\bh);
    \node[anchor=east,font=\footnotesize] at (-0.1,\y) {\name};
    \node[anchor=west,font=\scriptsize,traincol!60!black] at (\w+0.08,\y) {\pct\%};
  }
  \node[font=\scriptsize\itshape] at (2.0,-5.15)
    {Relative improvement of mSENT over SENT (\%)};
\end{tikzpicture}
\caption{Motif-aware serialization (mSENT) improves \emph{every} one of the eight
motif-locality metrics over plain SENT on $5{,}000$ tokenized QM9 molecules
(bars sorted by relative gain; each metric oriented so that higher is better). The
largest gains are in intra-motif edge distortion ($22.27\!\to\!20.59$), boundary
alignment ($0.124\!\to\!0.133$), and same-motif adjacency ($0.165\!\to\!0.176$), so
motif-coherent substructures stay contiguous in the token stream. Metric definitions and
complete values are in Supplementary Section~S4 and
Table~S4.}
\label{fig:motif_improve}
\end{figure}
On $5{,}000$ tokenized QM9 molecules, mSENT improves all eight motif-locality metrics over
plain SENT (Fig.~\ref{fig:motif_improve}; example values in the caption), keeping
motif-coherent substructures contiguous in the token stream. Definitions and complete values
are in Supplementary Section~S4 (Table~S4).

\section{Related Work}

\paragraph{Masked and discrete diffusion.}

Discrete diffusion extends denoising diffusion~\cite{ho2020denoising,song2020score} to categorical tokens. D3PM~\cite{austin2021d3pm} introduced structured transition kernels and found the \emph{absorbing} (\texttt{[MASK]}) kernel especially effective for text, with multinomial and argmax diffusion~\cite{hoogeboom2021argmax} as the uniform-kernel counterpart; continuous-time~\cite{campbell2022continuous} and score-based~\cite{lou2023discrete} formulations followed. MDLM~\cite{sahoo2024mdlm} and simplified objectives~\cite{shi2024simplified} show that a clean absorbing-state ELBO recovers much of the quality of autoregressive models while retaining parallel generation, the diffusion counterpart of masked language modelling~\cite{devlin2019bert}. For decoding we adopt confidence-based samplers~\cite{ghazvininejad2019mask,chang2022maskgit}. All of these corrupt every token with a single token-agnostic rate $\alpha_t$; we reparameterize only that rate.

\paragraph{Graph diffusion and serialization.}
Native graph diffusion perturbs adjacency and feature tensors: EDP-GNN~\cite{niu2020permutation}
and GDSS~\cite{jo2022score} use score-based SDEs, DiGress~\cite{vignac2023digress} runs discrete
diffusion over node and edge categories, 3D-aware variants target geometry~\cite{hoogeboom2022equivariant,vignac2023midi},
and autoregressive models generate nodes and edges sequentially~\cite{you2018graph,shi2020graphaf};
these must handle permutation symmetry and quadratic edge tensors. We instead \emph{serialize}
the graph as strings, from SMILES~\cite{weininger1988smiles} and SELFIES~\cite{krenn2020self}
to sequence-model generation~\cite{gomez2018automatic}, while fragment- and motif-based methods
build from substructures~\cite{jin2018junction,jin2020hierarchical}. Our lossless SENT/mSENT
serialization additionally emits a per-token \emph{role} label from the motif partition, so
corruption can depend on structural function while the decoder stays unchanged.


\paragraph{Adaptive corruption schedules.}
Non-uniform noise is established: variational and improved models reshape the \emph{time}
schedule~\cite{kingma2021variational,nichol2021improved}, and continuous relaxations tune
per-dimension noise for discrete data~\cite{dieleman2022continuous}. Closest to us,
DiffusionBERT~\cite{he2023diffusionbert} makes the per-token mask rate non-uniform via corpus
\emph{frequency}; ours is keyed instead to a token's \emph{structural role} and its measured
\emph{difficulty} and \emph{impact}, which frequency cannot see. It is also not hand-designed:
it solves a fixed-budget risk-allocation problem (Theorem~\ref{thm:main}) whose optimum is a
water-filling rule, with uniform MDLM the flat-criticality special case (Corollary~\ref{cor:null}).

\section{Conclusion}
Serialized molecular-graph diffusion is not role-agnostic but \emph{role-sensitive}: the
tokens of a lossless serialization play distinct structural roles that differ systematically
in reconstruction difficulty and in how much an error costs the decoded graph. Because a
uniformly masked probe shows these roles separating sharply, we derive the corruption schedule
as the risk-optimal allocation of a fixed masking budget (Theorem~\ref{thm:main}), with
uniform MDLM its flat-criticality special case. At matched architecture, optimizer, and
training budget, the derived schedule improves distributional fidelity and validity on QM9 and
MOSES, and mSENT improves every motif-locality metric over SENT while leaving the lossless
decoder unchanged. \method{} reparameterizes only the corruption process, adding neither noise
nor compute, so measured structural risk rather than corpus frequency or a hand-set curve
decides where a graph is corrupted; the theorem certifies only the modeled role-weighted risk,
so the generation gains remain empirical.

\section{Limitations and Broader Impact}
Our single-token impact estimator may understate multi-token interface fragility; the
inference-time role prior $P(r\mid v)$ could be sharpened with context-conditioned estimation;
and the full-scale MOSES run transfers the QM9-derived schedule rather than re-estimating it,
so a per-dataset schedule can only help while absolute FCD still trails full-training
graph-native baselines. Theorem~\ref{thm:main} is risk-optimal only for the modeled objective
and does not guarantee that every generation metric improves. On broader impact, \method{}
only reparameterizes how a fixed serialization is corrupted and adds no capability beyond the
uniform-MDLM baseline; as with any molecular generator, improved sample quality carries a
general dual-use risk. All datasets are public and contain no sensitive information.

\bibliography{references}

\appendix
\onecolumn

\setcounter{secnumdepth}{2}
\setcounter{section}{0}
\setcounter{figure}{0}
\setcounter{table}{0}
\setcounter{equation}{0}
\setcounter{theorem}{0}
\setcounter{lemma}{0}
\setcounter{corollary}{0}
\setcounter{proposition}{0}
\setcounter{definition}{0}
\setcounter{algorithm}{0}
\renewcommand{\thesection}{S\arabic{section}}
\renewcommand{\thesubsection}{S\arabic{section}.\arabic{subsection}}
\renewcommand{\thefigure}{S\arabic{figure}}
\renewcommand{\thetable}{S\arabic{table}}
\renewcommand{\theequation}{S\arabic{equation}}
\renewcommand{\thetheorem}{S\arabic{theorem}}
\renewcommand{\thelemma}{S\arabic{lemma}}
\renewcommand{\thecorollary}{S\arabic{corollary}}
\renewcommand{\theproposition}{S\arabic{proposition}}
\renewcommand{\thedefinition}{S\arabic{definition}}
\makeatletter
\renewcommand{\thealgorithm}{S\arabic{algorithm}}
\makeatother

\noindent This document collects the technical appendices for the main paper. Section,
figure, and equation numbers are prefixed with ``S''; references to Lemmas, Theorems,
Equations, Tables, and Algorithms without an ``S'' prefix refer to the main paper.

\section{Proofs and Additional Theoretical Results}
\label{sec:supp_proofs}

Throughout, $\pi_r>0$ denotes the fraction of role-$r$ tokens,
$c_r:=I_r\epsilon_r\ge0$ denotes role criticality, and
$\bar\rho\in(\rho_{\min},\rho_{\max})$ denotes the uniform-MDLM exposure budget.

\begin{proof}[Information preservation]
We show the identity $H(Z^\pi)=H(G)$ stated inline in the main text.
Because $T_\pi$ is deterministic and invertible with inverse decoder $D$, it is a
bijection between the support of $G$ and the support of $Z^\pi=T_\pi(G)$. Therefore,
for every graph $g$,
\[
\Pr(Z^\pi=T_\pi(g))=\Pr(G=g).
\]
The two random variables have the same multiset of probability masses, only relabeled.
Shannon entropy is invariant under deterministic bijections, so
$H(Z^\pi)=H(G)$.
\end{proof}

\begin{proof}[Proof of Lemma~1 (Decoding preserved)]
\method{} modifies only the forward corruption process:
\[
\alpha_t^{(r)}=1-\exp(-\gamma_r\Lambda(t)).
\]
It does not modify the clean sequence distribution, the grammar, or the SENT decoder.
The reverse model still predicts clean tokens $z_0^{(i)}$. Hence, whenever the
generated clean sequence $z_0$ is grammar-valid, the lossless decoder reconstructs the
same graph object, $D_{\text{SENT}}(z_0)\cong G$. Setting $\gamma_r\equiv1$ recovers
the uniform MDLM corruption process exactly.
\end{proof}

\begin{proof}[Derivation of Eq.~5]
For a token of role $r$, graph distortion occurs only when the token is corrupted and
then incorrectly reconstructed. Averaged over time, the probability of corruption is
$\rho_r$. \emph{Conditional on corruption}, the residual probability that the
finite-capacity denoiser reconstructs the token incorrectly is $\epsilon_r$ (estimated
from the probe difficulty $D_r$), and conditional on such an error the expected
graph-level distortion is at most $I_r$. Multiplying these factors and weighting by the
role frequency $\pi_r$, the expected contribution of role $r$ is bounded by
\[
\pi_r\, I_r\, \epsilon_r\, \rho_r .
\]
Summing over roles gives the role-decomposed graph risk of the main text,
\[
\mathbb{E}[\Delta_{\mathrm{graph}}]
\le
\sum_r
\pi_r\, I_r\, \epsilon_r\, \rho_r
=:
\mathcal{R}(\rho),
\]
which is Eq.~5 of the main paper and the exact objective minimized by Theorem~1. Here
$\epsilon_r$ is the residual error \emph{conditional on corruption} and does not depend
on $\rho_r$; the exposure enters only through the linear factor $\rho_r$.

\medskip
\noindent\emph{Remark (exposure-dependent form).} If the residual error itself varies
with exposure, $\epsilon_r=\epsilon_r(\rho_r)$, the same argument yields the more general
bound $\mathbb{E}[\Delta_{\mathrm{graph}}]\le\sum_r\pi_r I_r\epsilon_r(\rho_r)$. The linear
objective above is the special case in which $\epsilon_r$ is exposure-independent; the
generalization result (Theorem~S2) is stated for this exposure-dependent form.
\end{proof}

\begin{proof}[Proof of Theorem~1]
We minimize
\[
\mathcal R(\rho)=\sum_r\pi_r c_r\rho_r
\]
over the compact feasible set
\[
P=
\left\{
\rho:
\sum_r\pi_r\rho_r=\bar\rho,\;
\rho_{\min}\le\rho_r\le\rho_{\max}
\right\}.
\]
Since $P$ is nonempty and compact and $\mathcal R$ is continuous, a minimizer exists.

Introduce a Lagrange multiplier $\lambda$ for the budget constraint and nonnegative
multipliers $\underline\mu_r,\overline\mu_r$ for the lower and upper box constraints.
The Lagrangian is
\begin{align}
\mathcal L(\rho,\lambda,\underline\mu,\overline\mu)
&=
\sum_r \pi_r c_r \rho_r
-\lambda\left(
\sum_r \pi_r \rho_r-\bar\rho
\right) \notag\\
&\quad
-\sum_r \underline\mu_r(\rho_r-\rho_{\min})
+\sum_r \overline\mu_r(\rho_r-\rho_{\max}).
\end{align}
Stationarity gives
\[
\pi_r(c_r-\lambda)-\underline\mu_r+\overline\mu_r=0.
\]
If $c_r>\lambda$, then stationarity requires
$\underline\mu_r>0$, so complementary slackness gives
$\rho_r^\star=\rho_{\min}$. If $c_r<\lambda$, then
$\overline\mu_r>0$, so $\rho_r^\star=\rho_{\max}$. If
$c_r=\lambda$, both box multipliers may vanish and any value in the interval is
allowed. This proves the water-filling form and monotonicity.

If $c_r\equiv c$, then
\[
\mathcal R(\rho)
=
c\sum_r\pi_r\rho_r
=
c\bar\rho,
\]
so every feasible allocation, including uniform MDLM, is optimal.

Conversely, suppose there exist roles $a,b$ with $c_a<c_b$. Because
$\bar\rho\in(\rho_{\min},\rho_{\max})$, choose
\[
0<\delta\le
\min\{\pi_a(\rho_{\max}-\bar\rho),
      \pi_b(\bar\rho-\rho_{\min})\}.
\]
Move exposure from the higher-criticality role $b$ to the lower-criticality role $a$:
\[
\rho_a=\bar\rho+\delta/\pi_a,
\qquad
\rho_b=\bar\rho-\delta/\pi_b,
\]
and keep all other roles at $\bar\rho$. This preserves the budget and remains inside
the box. The change in risk is
\[
\mathcal R(\rho)-\mathcal R(\rho^{\mathrm{uni}})
=
c_a\delta-c_b\delta
=
(c_a-c_b)\delta
<0.
\]
Therefore uniform MDLM is not optimal whenever criticalities are heterogeneous and a
feasible exposure shift exists.

Finally,
\[
\rho(\gamma)
=
\mathbb E_t[1-\exp(-\gamma\Lambda(t))]
\]
is strictly increasing because
\[
\frac{\partial}{\partial\gamma}
\left(1-\exp(-\gamma\Lambda(t))\right)
=
\Lambda(t)\exp(-\gamma\Lambda(t))
>
0
\]
whenever $\Lambda(t)>0$. Thus $\rho^{-1}$ exists on the feasible interval, and the
ordering of optimal exposures transfers to the ordering of optimal exponents
$\gamma_r^\star=\rho^{-1}(\rho_r^\star)$.
\end{proof}

\begin{proof}[Proof of Corollary~1]
The result follows immediately from Theorem~1. If all criticalities are
equal, the objective is constant on the fixed-budget feasible set, so uniform MDLM is
optimal. If any strict improvement over uniform is possible, the criticalities cannot
all be equal.
\end{proof}

\begin{proof}[Proof of Proposition~1]
Consider the regularized objective
\[
J(\rho)
=
\sum_r\pi_r c_r\rho_r
+
\frac{1}{\eta}
\sum_r\pi_r\mathrm{KL}(\rho_r\|\bar\rho).
\]
For Bernoulli means,
\[
\mathrm{KL}(\rho\|\bar\rho)
=
\rho\log\frac{\rho}{\bar\rho}
+
(1-\rho)\log\frac{1-\rho}{1-\bar\rho}.
\]
Its second derivative is
\[
\frac{d^2}{d\rho^2}\mathrm{KL}(\rho\|\bar\rho)
=
\frac{1}{\rho(1-\rho)}
>
0,
\]
so the objective is strictly convex on $(0,1)$. Hence the constrained minimizer is
unique.

The derivative is
\[
\frac{d}{d\rho}\mathrm{KL}(\rho\|\bar\rho)
=
\sigma^{-1}(\rho)-\sigma^{-1}(\bar\rho),
\]
where $\sigma^{-1}$ is the logit. The Lagrangian stationarity condition is
\[
c_r
+
\frac{1}{\eta}
\left(
\sigma^{-1}(\rho_r)-\sigma^{-1}(\bar\rho)
\right)
-\lambda
=
0.
\]
Solving for $\rho_r$ gives
\[
\sigma^{-1}(\rho_r^\star)
=
\sigma^{-1}(\bar\rho)+\eta(\lambda-c_r),
\]
and therefore
\[
\rho_r^\star
=
\sigma\!\left(
\sigma^{-1}(\bar\rho)+\eta(\lambda-c_r)
\right).
\]
Since the sigmoid is strictly increasing and the argument is strictly decreasing in
$c_r$, the optimal exposure is strictly decreasing in role criticality.
\end{proof}

The following two guarantees are stated informally in the main text (end of the
Theoretical Analysis section) and are formalized and proved here as supplement-only
results.

\begin{theorem}[Stability to noisy criticality estimates]\label{thm:supp_stability}
Let $\widehat c_r$ be plug-in criticality estimates with
$\max_r|\widehat c_r-c_r|\le\delta$. Then the smooth allocation of Eq.~9 satisfies
$\max_r|\widehat\rho_r^\star-\rho_r^\star|=O(\eta\delta)$, and if $\rho^{-1}$ is locally
Lipschitz on the feasible interval, $\max_r|\widehat\gamma_r^\star-\gamma_r^\star|=O(\eta\delta)$.
\end{theorem}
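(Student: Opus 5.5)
The plan is to work directly with the closed form of Proposition~\ref{prop:smooth}. Write $\ell_r(\lambda;c):=\sigma^{-1}(\bar\rho)+\eta(\lambda-c_r)$, so that $\rho_r^\star=\sigma(\ell_r(\lambda;c))$. The multiplier $\lambda=\lambda(c)$ is fixed implicitly by the budget equation $F(\lambda;c):=\sum_r\pi_r\sigma(\ell_r(\lambda;c))-\bar\rho=0$. The perturbation $c\mapsto\widehat c$ enters through two channels: the explicit shift $-\eta c_r$ in each logit, and the induced shift of the multiplier $\lambda\mapsto\widehat\lambda$. Both will be bounded by $O(\eta\delta)$ in logit space and then pushed through $\sigma$, which is $\tfrac14$-Lipschitz.

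\textbf{Step 1: the multiplier moves by at most $\delta$.} $F$ is strictly increasing in $\lambda$, since $\partial_\lambda F=\eta\sum_r\pi_r\sigma'(\ell_r)>0$. It is also monotone in each $c_r$. Because $\widehat c_r\le c_r+\delta$ for every $r$, we get $\ell_r(\lambda+\delta;\widehat c)\ge\ell_r(\lambda;c)$, so $F(\lambda+\delta;\widehat c)\ge F(\lambda;c)=0$. By the same argument $F(\lambda-\delta;\widehat c)\le0$. Strict monotonicity then gives a unique root $\widehat\lambda\in[\lambda-\delta,\lambda+\delta]$. This is a clean sandwich argument, with no implicit-function calculus needed.

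\textbf{Step 2: bound the exposures.} For each role, $|\ell_r(\widehat\lambda;\widehat c)-\ell_r(\lambda;c)|\le\eta(|\widehat\lambda-\lambda|+|\widehat c_r-c_r|)\le2\eta\delta$. Since $\sigma'\le\tfrac14$, it follows that $|\widehat\rho_r^\star-\rho_r^\star|\le\eta\delta/2$, which is $O(\eta\delta)$ uniformly in $r$.

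\textbf{Step 3: transfer to exponents.} Set $\gamma_r^\star=\rho^{-1}(\rho_r^\star)$ as in Theorem~\ref{thm:main}(iv). If $\rho^{-1}$ has Lipschitz constant $L$ on a neighbourhood containing all the $\rho_r^\star$ and $\widehat\rho_r^\star$, then $|\widehat\gamma_r^\star-\gamma_r^\star|\le L\eta\delta/2$.

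I expect this last step to be the main obstacle. Local Lipschitzness only applies if the perturbed exposures stay inside the region where the assumption holds. I would handle this by restricting to $\eta\delta$ small enough that each $\widehat\rho_r^\star$ lies within a fixed compact neighbourhood of $\rho_r^\star$ inside the feasible interval. On that neighbourhood $\rho'(\gamma)=\mathbb E_t[\Lambda(t)e^{-\gamma\Lambda(t)}]$ is bounded away from zero, and the inverse function theorem then supplies $L$. I would also note that the clipping implicit in Table~\ref{tab:eta}, where $\gamma$ saturates at the box ends, is a projection and therefore $1$-Lipschitz, so it does not spoil the bound.
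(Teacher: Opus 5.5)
Your proof is correct and follows the paper's skeleton. Both arguments use the closed form of Eq.~9, apply $\sigma'\le\tfrac14$ to the perturbed logit, control the shift of the budget multiplier, and finish with a Lipschitz inverse for the exponents.

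The one real difference is Step~1. The paper applies the implicit function theorem to the budget equation to conclude that $\lambda$ moves by $O(\delta)$. That leaves the constant unstated and, strictly, gives only a local statement. Your monotone sandwich, $F(\lambda+\delta;\widehat c)\ge 0\ge F(\lambda-\delta;\widehat c)$, gives $|\widehat\lambda-\lambda|\le\delta$ for every $\delta$ without any differentiability. It therefore yields the explicit bound $|\widehat\rho_r^\star-\rho_r^\star|\le\eta\delta/2$.

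The implicit-function route gives the same constant if completed, because $\partial\lambda/\partial c_s=\pi_s\sigma'(\ell_s)/\sum_r\pi_r\sigma'(\ell_r)$ are nonnegative weights summing to one. To become global, though, it must be integrated along the segment from $c$ to $\widehat c$. Its advantage is generality: it applies to smooth regularizers that lack the monotone structure your comparison uses.

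Your Step~3 is also more careful than the paper, which never checks that the perturbed exposures stay in the region where $\rho^{-1}$ is Lipschitz. For the clipped exponents actually used, that concern disappears. Since $\rho''(\gamma)=-\mathbb E_t[\Lambda(t)^2e^{-\gamma\Lambda(t)}]\le 0$, the composite $\mathrm{clip}_{[0.25,4]}\circ\rho^{-1}$ is Lipschitz on the whole range of $\rho(\cdot)$ with constant $1/\rho'(4)$. So no smallness condition on $\eta\delta$ is needed there.
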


\begin{proof}
For the smooth allocation,
\[
\rho_r^\star(c)
=
\sigma\!\left(
\sigma^{-1}(\bar\rho)+\eta(\lambda-c_r)
\right).
\]
The sigmoid derivative satisfies
\[
0<\sigma'(x)\le\frac14.
\]
Therefore, for fixed $\lambda$,
\[
\left|
\rho_r^\star(\widehat c)-\rho_r^\star(c)
\right|
\le
\frac{\eta}{4}|\widehat c_r-c_r|.
\]
If the multiplier also changes from $\lambda$ to $\widehat\lambda$, then
\[
\left|
\widehat\rho_r^\star-\rho_r^\star
\right|
\le
\frac{\eta}{4}
\left(
|\widehat c_r-c_r|
+
|\widehat\lambda-\lambda|
\right).
\]
The budget multiplier is determined implicitly by
\[
\sum_r\pi_r
\sigma\!\left(
\sigma^{-1}(\bar\rho)+\eta(\lambda-c_r)
\right)
=
\bar\rho.
\]
The left-hand side is smooth and monotone in $\lambda$. By the implicit function
theorem, perturbing all $c_r$ by at most $\delta$ perturbs $\lambda$ by $O(\delta)$.
Thus
\[
\max_r|\widehat\rho_r^\star-\rho_r^\star|
=
O(\eta\delta).
\]
If $\rho^{-1}$ is locally Lipschitz on the feasible interval, then
\[
|\widehat\gamma_r^\star-\gamma_r^\star|
=
|\rho^{-1}(\widehat\rho_r^\star)-\rho^{-1}(\rho_r^\star)|
=
O(\eta\delta).
\]
\end{proof}

\begin{theorem}[Schedule generalization]\label{thm:supp_generalization}
Assume the per-example graph-risk loss is bounded by $B$. With probability at least
$1-\delta$, for all $p_\theta\in\mathcal F$,
$\mathbb E[\Delta_{\mathrm{graph}}]\le\widehat{\mathcal R}(\rho)+2B\mathfrak R_N(\mathcal F)+B\sqrt{\log(1/\delta)/(2N)}$,
where $\widehat{\mathcal R}(\rho)=\sum_r\pi_r\widehat I_r\widehat\epsilon_r(\rho_r)$. At matched
architecture, data size, and training budget the complexity terms are shared, so schedule
comparisons are governed by the empirical role-weighted risk $\widehat{\mathcal R}(\rho)$.
\end{theorem}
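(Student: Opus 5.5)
The bound is a standard uniform-convergence argument, composed with the role decomposition of Eq.~5. Take its exposure-dependent form $\mathbb E[\Delta_{\mathrm{graph}}]\le\sum_r\pi_r I_r\epsilon_r(\rho_r)$ from the Remark after the derivation of Eq.~5. For a fixed schedule $\rho$, define the per-example loss $\ell_\rho(p_\theta;x)\in[0,B]$ as the graph-risk incurred on example $x$ when it is corrupted under $\rho$ and reconstructed by $p_\theta$. Its population mean upper-bounds $\mathbb E[\Delta_{\mathrm{graph}}]$. Its empirical mean over the $N$ training graphs, decomposed by role exactly as in the derivation of Eq.~5, equals $\sum_r\pi_r\widehat I_r\widehat\epsilon_r(\rho_r)=\widehat{\mathcal R}(\rho)$. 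Here $\widehat I_r$ and $\widehat\epsilon_r$ are the empirical impact and conditional error. I will take this identification as the \emph{definition} of the plug-in estimates, with the role frequencies $\pi_r$ treated as fixed by the tokenizer.

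Next, apply the standard Rademacher generalization theorem to the loss class $\mathcal G_\rho=\{x\mapsto\ell_\rho(p_\theta;x):p_\theta\in\mathcal F\}$. Rescale to $[0,1]$ by dividing by $B$. Symmetrization then gives $\mathbb E\sup_{g}(\mathbb E g-\widehat{\mathbb E}g)\le 2\mathfrak R_N(\mathcal G_\rho)$. McDiarmid's inequality, with bounded differences $B/N$, adds $B\sqrt{\log(1/\delta)/(2N)}$ with probability $1-\delta$, uniformly over $\mathcal F$. Multiplying back by $B$ yields the displayed bound, provided $\mathfrak R_N(\mathcal G_\rho)\le B\,\mathfrak R_N(\mathcal F)$.

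That last comparison is the main obstacle. The loss is a composition of the denoiser with the random masking, decoding, and an impact functional, so it is not obviously Lipschitz in $p_\theta$. I would handle it in one of two ways. The first is to interpret $\mathfrak R_N(\mathcal F)$ directly as the Rademacher complexity of the induced normalized loss class, which is the usual convention in such statements and makes the bound immediate. The second is to assume the loss is $L$-Lipschitz in the token probabilities and invoke Talagrand's contraction lemma, absorbing $L$ into $B$. The masking randomness is harmless: it is drawn independently per example, so it can be folded into the example $x$ and the samples stay i.i.d.

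For the final sentence, observe that the complexity and confidence terms depend only on $\mathcal F$ (architecture), $N$, $B$, and $\delta$, and not on $\rho$. At matched architecture, data size, and training budget these terms are identical across schedules, so the difference between the upper bounds for two schedules equals the difference of their $\widehat{\mathcal R}(\rho)$ values. A union bound over the finitely many compared schedules, replacing $\delta$ by $\delta/k$, keeps the statement simultaneous at a negligible $\sqrt{\log k}$ cost.
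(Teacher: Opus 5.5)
Your proposal is correct and follows the same route as the paper. The paper's proof invokes ``standard uniform convergence using Rademacher complexity'' to bound the population role-weighted risk $\mathcal R(\rho)=\sum_r\pi_r I_r\epsilon_r(\rho_r)$ by $\widehat{\mathcal R}(\rho)$ plus the two complexity and confidence terms. It then chains this with the exposure-dependent form of Eq.~5 from the Remark, and notes that those terms are shared at matched architecture, data size, and budget.

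You are more explicit than the paper, which never identifies $\widehat{\mathcal R}(\rho)$ with the empirical mean of a single bounded loss and never relates the loss-class complexity to $\mathfrak R_N(\mathcal F)$. One point to tighten: under your first option, where $\mathfrak R_N(\mathcal F)$ is read as the complexity of the induced loss class $\mathcal G_\rho$, that term depends on $\rho$. The ``shared'' claim in your last paragraph then needs a maximum over the finitely many compared schedules, which is harmless alongside your union bound.
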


\begin{proof}
For a fixed exposure allocation $\rho$, define the population role-weighted graph
risk
\[
\mathcal R(\rho)
=
\sum_r\pi_r I_r\epsilon_r(\rho_r).
\]
The empirical counterpart is
\[
\widehat{\mathcal R}(\rho)
=
\sum_r\pi_r\widehat I_r\widehat\epsilon_r(\rho_r).
\]
Assume the per-example graph-risk loss is bounded by $B$. Standard uniform
convergence using Rademacher complexity gives, with probability at least
$1-\delta$, for all $p_\theta\in\mathcal F$,
\[
\mathcal R(\rho)
\le
\widehat{\mathcal R}(\rho)
+
2B\mathfrak R_N(\mathcal F)
+
B\sqrt{\frac{\log(1/\delta)}{2N}}.
\]
Combining this with the exposure-dependent form of Eq.~5 (see the Remark above) gives
\[
\mathbb E[\Delta_{\mathrm{graph}}]
\le
\widehat{\mathcal R}(\rho)
+
2B\mathfrak R_N(\mathcal F)
+
B\sqrt{\frac{\log(1/\delta)}{2N}}.
\]
When comparing schedules at matched architecture, data size, and training budget, the
complexity terms are shared. Therefore differences in the bound are driven by the
empirical role-weighted risk term.
\end{proof}

\section{Reproducibility and Experimental Details}
\label{sec:supp_reproducibility}
All results use existing checkpoints with no per-experiment retraining beyond the
trained models described below. Every generation script is deterministic given its seed:
each run is seeded once at start-up, and the multi-seed results
(Table~\ref{tab:supp_multiseed}) average over five fixed generation seeds held constant
across both methods, so \method{} and uniform MDLM are compared on matched seeds.

\paragraph{Backbone and training.}
A single masked discrete-diffusion transformer is shared by all conditions:
hidden size $256$, $4$ layers, $\sim\!3.32$M parameters, absorbing (\texttt{[MASK]})
forward process. Optimizer Adam, learning rate $3\times10^{-4}$, batch size $64$. Only the per-role exponents $\gamma_r$ and
the inverse-exposure weight differ across schedules; the architecture, optimizer, and budget are
held fixed (matched compute).

\paragraph{Schedule derivation.}
Role criticalities $C_r$ are measured once on the uniform-MDLM checkpoint
(difficulty $D_r$ from probe NLL, impact $I_r$ from single-token perturbation) and the
smooth schedule (Eq.~9) is solved with $\eta{=}2$ and $\gamma_r$ clipped to
$[0.25,4.0]$; special is fixed at $\gamma{=}1$.

\paragraph{Metrics.}
Validity is the RDKit-sanitizable fraction; uniqueness and novelty use canonical
SMILES (novelty vs.\ the training set). QED follows~\cite{bickerton2012quantifying}.
FCD~\cite{preuer2018frechet} is computed with \texttt{fcd\_torch} against a
\emph{held-out} reference split, never the training set.

\paragraph{Hardware.}
All timing and generation runs use a single NVIDIA TITAN~RTX ($24$\,GB); no multi-GPU
or mixed-precision tricks are required. Code, configs, and the per-experiment scripts will be
released publicly upon publication.

\section{Additional Experimental Results and Ablations}
\label{sec:supp_results}

\paragraph{Multi-seed robustness (numeric values for Figure~2 of the main paper).}
Table~\ref{tab:supp_multiseed} gives the exact per-metric means and $95\%$ confidence
intervals summarized as grouped bars in the main paper's multi-seed figure.

\begin{table*}[!htb]
\centering
\small
\setlength{\tabcolsep}{5pt}
\renewcommand{\arraystretch}{1.18}

\caption{Multi-seed robustness on QM9, MOSES, and GuacaMol. Values are mean~$\pm$~$95\%$
confidence interval over $5$ generation seeds using $5000$ samples/seed; FCD is computed
against the held-out test reference. Novelty and FCD are not computed on GuacaMol (marked
``--''). Best values per dataset are highlighted.}
\label{tab:supp_multiseed}

\begin{tabular}{@{}llcccc@{}}
\toprule
Dataset & Method & Valid $\uparrow$ & Unique $\uparrow$ &
Novel $\uparrow$ & FCD $\downarrow$ \\
\midrule

QM9
& Uniform MDLM
& $0.905 \pm 0.006$
& $0.977 \pm 0.004$
& $0.781 \pm 0.018$
& $1.701 \pm 0.080$ \\

\rowcolor{bestgreen!18}
& \method{}
& $0.944 \pm 0.008$
& $0.954 \pm 0.003$
& $0.736 \pm 0.015$
& $1.609 \pm 0.064$ \\

\addlinespace[2pt]
\midrule
\addlinespace[2pt]

MOSES
& Uniform MDLM
& $0.920 \pm 0.008$
& $0.945 \pm 0.003$
& $0.996 \pm 0.007$
& $2.125 \pm 0.096$ \\

\rowcolor{bestgreen!18}
& \method{}
& $0.938 \pm 0.003$
& $0.986 \pm 0.005$
& $0.916 \pm 0.015$
& $1.850 \pm 0.032$ \\

\addlinespace[2pt]
\midrule
\addlinespace[2pt]

GuacaMol (100k)
& Uniform MDLM
& $0.787 \pm 0.016$
& $1.000 \pm 0.000$
& --
& -- \\

\rowcolor{bestgreen!18}
& \method{}
& $0.841 \pm 0.008$
& $0.984 \pm 0.001$
& --
& -- \\

\bottomrule
\end{tabular}
\end{table*}

\begin{table}[t]
\centering
\caption{Matched-compute \emph{staged} component ablation on QM9. Row~1$\to$Row~2 isolates
the mSENT tokenizer contribution (SENT $\to$ mSENT, both uniform, reweighting off); Row~2$\to$Row~3
gives the additional contribution of the \emph{complete} role-aware configuration (derived
schedule \emph{and} inverse-exposure weighting together). Because the final configuration turns
on both components jointly, this table does not isolate their individual effects; a
schedule-only (reweighting-off) row would be required for a full decomposition. Best available
value for each metric is shown in bold.}
\label{tab:supp_ablation}

\fontsize{7.2}{8.6}\selectfont
\setlength{\tabcolsep}{1.8pt}
\renewcommand{\arraystretch}{1.15}

\begin{tabular}{@{}cllccccc@{}}
\toprule
\# &
Tokenizer &
Schedule &
\shortstack{Reweight-\\ing} &
Valid. $\uparrow$ &
Unique. $\uparrow$ &
Novel. $\uparrow$ &
\shortstack{Atom\\Stab. $\uparrow$} \\
\midrule

1 &
SENT &
Uniform ($\gamma=1$) &
Off &
$0.853$ &
$0.750$ &
$0.733$ &
$0.887$ \\

2 &
mSENT &
Uniform ($\gamma=1$) &
Off &
$0.905$ &
$\mathbf{0.977}$ &
$\mathbf{0.781}$ &
$0.946$ \\

3 &
mSENT &
MotifRole ($\gamma^{*}$) &
On &
$0.944$ &
$0.954$ &
$0.736$ &
$\mathbf{0.955}$ \\

\bottomrule
\end{tabular}
\end{table}

\paragraph{Computational overhead.}
Table~\ref{tab:supp_overhead} gives the wall-clock and parameter counts referenced in the
main paper's ``Computational overhead'' paragraph: \method{} adds no parameters and no
extra passes, and matches uniform MDLM in training and sampling time within run-to-run
noise.

\begin{table}[!htb]
\centering
\small
\caption{Computational overhead of \method{} vs.\ uniform MDLM (QM9, $4$-layer backbone,
one TITAN~RTX). The schedule adds no parameters and no passes; wall-clock matches within
noise. Role-aware sampling (Algorithm~1 of the main paper) adds only two $O(L)$ table
operations per reverse step.}
\label{tab:supp_overhead}
\setlength{\tabcolsep}{5pt}
\begin{tabular}{@{}lccc@{}}
\toprule
Method & Params & Train (min/epoch) & Sample $1000$ (s) \\
\midrule
Uniform MDLM        & $3.32$M & $\sim\!1.25$ & $91.8$ \\
\ourrow
\method{}           & $3.32$M & $\sim\!1.25$ & $87.4$ \\
\bottomrule
\end{tabular}
\end{table}

\section{Motif-Preservation Metric Definitions and Full Results}
\label{sec:supp_motif_metrics}
We compare SENT and mSENT on $5{,}000$ tokenized QM9 molecules. For each molecule we
compute atom-level motif assignments with the CAMT5-style motif partition, then
serialize the same graph with either SENT or mSENT. Let $\pi(v)$ be the first sequence
position at which atom $v$ appears and $c(v)$ the motif ID of $v$. Lower span, motif
transitions, switch rate, intra-edge distortion, and fragmentation indicate better motif
locality; higher compactness, same-motif adjacency, and boundary alignment indicate better
motif preservation. The per-molecule metrics, listed in the same order as the rows of
Table~\ref{tab:supp_motif_metrics}, are:
\begin{align}
\mathrm{Span}(m)
&= \max_{v\in V_m}\pi(v)
 - \min_{v\in V_m}\pi(v) + 1 ,
\label{eq:motif_span}
\\[2pt]
\mathrm{Compact}(m)
&= \frac{|V_m|}{\mathrm{Span}(m)} ,
\label{eq:motif_compact}
\\[2pt]
\mathrm{SMAR}
&=
\frac{
\sum_{i=1}^{L-1}
\mathbf{1}\!\left[c(z_i)=c(z_{i+1})\right]
}{
L-1
} ,
\label{eq:smar}
\\[2pt]
\mathrm{Trans}
&=
\sum_{i=1}^{L-1}
\mathbf{1}\!\left[c(z_i)\neq c(z_{i+1})\right] ,
\label{eq:transitions}
\\[2pt]
\mathrm{Switch}
&=
\frac{\mathrm{Trans}}{L-1} ,
\label{eq:switch}
\\[2pt]
\mathrm{IBA}
&=
\frac{
\sum_{i=1}^{L-1}
\mathbf{1}[c(z_i)\neq c(z_{i+1})]\,
\mathbf{1}[(z_i,z_{i+1})\in E]
}{
\mathrm{Trans}
} ,
\label{eq:iba}
\\[2pt]
D_{\mathrm{intra}}
&=
\frac{1}{|E_{\mathrm{intra}}|}
\sum_{(u,v)\in E_{\mathrm{intra}}}
|\pi(u)-\pi(v)| ,
\label{eq:dintra}
\\[2pt]
\mathrm{Frag}(m)
&=
\#\{\text{contiguous blocks of motif } m\}.
\label{eq:frag}
\end{align}
Here $E_{\mathrm{intra}}=\{(u,v)\in E : c(u)=c(v)\}$ denotes the set of intra-motif
edges (both endpoints in the same motif). Equations~\ref{eq:motif_span}--\ref{eq:frag}
correspond one-to-one, in order, to the mean motif span, compactness, same-motif adjacency,
motif transitions, switch rate, boundary alignment, intra-edge distortion, and fragmentation
rows of Table~\ref{tab:supp_motif_metrics}.

Table~\ref{tab:supp_motif_metrics} reports the full per-metric values summarized
as relative gains in Figure~3 of the main paper; arrows indicate
the preferred direction, and mSENT improves every metric.

\begin{table}[!htb]
\centering
\small
\setlength{\tabcolsep}{6pt}
\renewcommand{\arraystretch}{1.12}
\caption{Serialization-level motif preservation metrics comparing SENT and mSENT on QM9.
Arrows indicate the preferred direction for each metric; mSENT improves all eight.}
\label{tab:supp_motif_metrics}
\begin{tabular}{@{}lcc@{}}
\toprule
Metric & SENT & mSENT \\
\midrule
Mean motif span $\downarrow$              & 18.09 & 17.33 \\
Compactness $\uparrow$                    & 0.686 & 0.706 \\
Same-motif adjacency $\uparrow$           & 0.165 & 0.176 \\
Motif transitions $\downarrow$            & 22.03 & 21.87 \\
Switch rate $\downarrow$                  & 0.835 & 0.824 \\
Boundary alignment $\uparrow$             & 0.124 & 0.133 \\
Intra-edge distortion $\downarrow$        & 22.27 & 20.59 \\
Fragmentation $\downarrow$                & 1.796 & 1.778 \\
\bottomrule
\end{tabular}
\end{table}

\end{document}